\documentclass[letterpaper,11pt]{article}

\usepackage[margin=1in]{geometry} 

\usepackage[utf8]{inputenc} 
\usepackage[T1]{fontenc}    
\usepackage{url}            
\usepackage{booktabs}       
\usepackage{amsfonts}       
\usepackage{nicefrac}       
\usepackage{microtype}      
\usepackage{amsmath,amssymb}
\usepackage{bm}
\usepackage{amsthm}         

\usepackage{xcolor}


\usepackage{amsmath,amsfonts,bm}
\usepackage{amssymb}









\def\eqref#1{equation~\ref{#1}}









\def\1{\bm{1}}










\def\mI{{\bm{I}}}

\DeclareMathAlphabet{\mathsfit}{\encodingdefault}{\sfdefault}{m}{sl}
\SetMathAlphabet{\mathsfit}{bold}{\encodingdefault}{\sfdefault}{bx}{n}













\newcommand{\Pxy}{P_{X,Y}}

\newcommand{\Pygx}{P_{Y|X}}
\newcommand{\Px}{P_{X}}
\newcommand{\Py}{P_{Y}}

\newcommand{\EE}[1]{\mathbb{E}\left[#1\right]}

\newcommand{\calX}{\mathcal{X}}

\newcommand{\calN}{\mathcal{N}}
\newcommand{\calY}{\mathcal{Y}}

\newcommand{\calF}{\mathcal{F}}
\newcommand{\calG}{\mathcal{G}}

\newcommand{\calL}{\mathcal{L}}

\newcommand{\bff}{\mathbf{f}}
\newcommand{\bg}{\mathbf{g}}
\newcommand{\bSigma}{\bm{\Sigma}}

\newcommand{\bA}{\mathbf{A}}
\newcommand{\bB}{\mathbf{B}}
\newcommand{\bC}{\mathbf{C}}

\newcommand{\bG}{\mathbf{G}}

\newcommand{\bF}{\mathbf{F}}
\newcommand{\bI}{\mathbf{I}}

\newcommand{\bU}{\mathbf{U}}
\newcommand{\bV}{\mathbf{V}}

\newcommand{\bL}{\mathbf{L}}

\newcommand{\bY}{\mathbf{Y}}
\newcommand{\bZ}{\mathbf{Z}}
\newcommand{\bx}{\mathbf{x}}
\newcommand{\by}{\mathbf{y}}

\newcommand{\bX}{\mathbf{X}}

\renewcommand{\tilde}{\widetilde}

\newcommand{\Reals}{\mathbb{R}}
\newcommand{\mtf}{\mathbf{\tilde{f}}}
\newcommand{\mtg}{\mathbf{\tilde{g}}}
\newcommand{\pxy}{p_{X,Y}}
\newcommand{\px}{p_X}
\newcommand{\py}{p_Y}
\newcommand{\pygx}{p_{Y|X}}

\usepackage{algorithmicx}
\usepackage{algorithm}
\usepackage[noend]{algpseudocode}
\algnewcommand\algorithmicinput{\textbf{Input:}}
\algnewcommand\Input{\item[\algorithmicinput]}
\algnewcommand\algorithmicoutput{\textbf{Output:}}
\algnewcommand\Output{\item[\algorithmicoutput]}

\usepackage{amsthm}
\newtheorem{prop}{Proposition}
\theoremstyle{remark}

\usepackage{graphicx}
\usepackage{epstopdf}
\usepackage{natbib}
\usepackage{subfig}
\usepackage[nottoc,numbib]{tocbibind} 
\setlength{\tabcolsep}{4pt}

\title{Correspondence Analysis Using Neural Networks}
\date{} 
\author{
    Hsiang Hsu\thanks{Hsiang Hsu and Flavio P. Calmon are with John A. Paulson School of Engineering and Applied Sciences, Harvard University, Cambridge, MA (e-mails: \texttt{hsianghsu@g.harvard.edu}, \texttt{flavio@seas.harvard.edu}).}, 
    Salman Salamatian\thanks{Salman Salamatian is with the Research Laboratory of Electronics at the Massachusetts Institute of Technology, Cambridge, MA (e-mail: \texttt{salmansa@mit.edu}).},
    and Flavio P. Calmon${}^*$\\
}

\begin{document}

\maketitle
\begin{abstract}
  Correspondence analysis (CA) is a multivariate statistical tool used to visualize and interpret data dependencies. CA has found applications in fields ranging from epidemiology to social sciences.  
  However, current methods used to perform CA do not scale to large, high-dimensional datasets.
  By re-interpreting the objective in CA using an information-theoretic tool called the principal inertia components, we demonstrate that performing CA is equivalent to solving a functional optimization problem over the space of finite variance functions of two random variable. We show that this optimization problem, in turn, can be efficiently approximated by neural networks. The resulting formulation, called the correspondence analysis neural network (CA-NN), enables CA to be performed at an unprecedented scale.
  We validate the CA-NN on synthetic data,  and demonstrate how it can be used to perform CA on a variety of datasets, including  food recipes, wine compositions, and images. Our results outperform traditional methods used in CA, indicating that CA-NN can  serve as a new, scalable tool for interpretability and visualization of complex dependencies between random variables.
\end{abstract}
\textbf{Keywords}: Correspondence analysis, principal inertia components, principal functions, canonical correlation analysis.

\clearpage
\tableofcontents

\clearpage
\section{Introduction}
Correspondence Analysis (CA) is an exploratory multivariate statistical technique that converts data into a graphical display with orthogonal factors.
CA's history in the applied statistics literature dates back several decades \citep{benzecri1973correspondence, greenacre1984theory,lebart2013correspondence, greenacre2017correspondence}. 
In a similar vein to Principal Component Analysis (PCA) and its kernel variants \citep{hoffmann2007kernel}, CA is a technique that maps the data onto a low-dimensional representation. By construction, this new representation  captures possibly non-linear  relationships between the underlying variables, and can be used to interpret the dependence between two random variables $X$ and $Y$ from observed samples. CA has the  ability to produce interpretable,  low-dimensional visualizations (often two-dimensional) that capture complex relationships in data with entangled and intricate dependencies. This has led to its successful deployment in fields ranging from genealogy and epidemiology to social and environmental sciences \citep{tekaia2016genome, sourial2010correspondence, carrington2005models, ter2004co, ormoli2015diversity, ferrari2016whole}.

Despite being a versatile statistical technique, CA has been underused on the large datasets currently found in the machine learning landscape.
This can potentially be explained by the fact that, traditionally, CA utilizes as its main ingredient a singular value decomposition (SVD) of the normalized contingency table of $X$ and $Y$  (i.e., an empirical approximation of the joint distribution $\Pxy$).  
This contingency table-based approach for performing CA has three fundamental limitations.
First, it is restricted to data drawn from \emph{discrete} distributions with finite support, since  contingency tables for  continuous variables will be highly dependent on a chosen quantization which, in turn, may jeopardize information in the data.
Second, even when the underlying distribution of the data is discrete, reliably estimating the contingency table (i.e., approximating $P_{X,Y}$) may be infeasible due to limited number of samples. This inevitably hinges CA  on the more (statistically) challenging problem of estimating $P_{X,Y}$. 
Third, building contingency tables is not feasible for \emph{high-dimensional} data. For example, if $X \in \{0, 1\}^a$ and all outcomes have non-zero probability, then the contingency table has $2^a$ rows.

We address these limitations by taking a fresh theoretical look at CA and re-interpreting the low-dimensional representations produced by this technique from a functional analysis vantage point. We bring to bear an information-theoretic tool called the \emph{principal inertia components} (PICs) of a joint distribution $\Pxy$ \citep{du2017principal}. 
In essence, the PICs provide a fine-grained decomposition of the statistical dependency of $X$ and $Y$, fully determining an orthornormal set of finite-variance functions of $X$ that can be reliably estimated from $Y$ (and vice-versa) called the  \emph{principal functions} (PFs).  
The d\'ebute of PICs under different guises can be traced back to the works of  \citet{hirschfeld1935connection}, \citet{gebelein1941statistische}, and   \citet{renyi1959measures}. The  PICs are at the heart of the Alternating Conditional Expectations (ACE) algorithm \citep{buja1990remarks, breiman1985estimating} and have been studied in the information theory and statistics literature \citep{witsenhausen1975sequences, makur2015bounds, huang2017information, du2017principal}.

We demonstrate that the low-dimensional projections produced by CA are \emph{exactly} the principal functions found in the theory of PICs. The principal functions, in turn,  can be determined by solving a quadratic optimization problem over the space of finite variance functions of $X$ and $Y$. Solving this optimization for arbitrary variables is, at first glance, infeasible. However, by restricting our search to functions representable by multi-layer neural networks, we demonstrate how the principal functions can be efficiently approximated for both discrete and continuous (potentially high-dimensional) random variables. In summary,  by first formulating CA in terms of a PIC-based optimization program, and then approximating this program using neural networks, we are able to perform CA at an unprecedented scale. 

The contributions of this paper are as follows:
\begin{enumerate}
    \item We show how the PICs and principal functions can be used for correspondence analysis (Section~\ref{sec:background}).
    \item We introduce the Correspondence Analysis Neural Net (CA-NN) to estimate the PICs and principal functions,  thereby making CA scalable to discrete and continuous (high-dimensional) data (Section~\ref{sec:corrann}).
    \item We use synthetic data to demonstrate that the principal functions found by CA-NN match the functions predicted by theory (Section~\ref{sec:synthetic}). 
    \item Moreover, we apply the CA-NN on several real-world datasets, including images (MNIST \citep{lecun1998gradient}, CIFAR-10 \citep{krizhevsky2009learning}), recipes \citep{kaggle_what_cooking}, and UCI wine quality \citep{asuncion2007uci}. These examples demonstrate how the interpretable analysis and visualizations found in the CA literature can now be produced at a much larger scale (Section~\ref{sec:real_data}). All codes and experiments are available in \citep{Hsu2019}.
\end{enumerate}
\subsection{Related Work}\label{sec:related_works}
Several statistical methods exist for producing correlated low-dimensional representations of two variables $X$ and $Y$. For example, Canonical Correlation Analysis (CCA) \citep{hotelling1936relations}  seeks to find linear relationships between variables. Kernel Canonical Correlation Analysis (KCCA) \citep{bach2002kernel} extends this approach by first projecting the variables onto a reduced kernel Hilbert space. 
The method closest to the one described here is Deep Canonical Correlation Analysis (DCCA) \citep{andrew2013deep}, where  non-linear representations of multi-view data is produced using neural nets. The objective of DCCA in \citep[Eq. 1]{wang2015deep} is similar to finding the PICs. However, the non-linear projections found by DCCA are not exactly the principal functions, and  \citet{wang2015deep} do not make a connection to CA, PICs, nor Hilbert spaces.
DCCA is also closely related to maximal correlated PCA \citep{feizi2017maximally}.

PICs are a generalization of R\'enyi maximal correlation; in fact, the first PIC is identical to R\'enyi maximal correlation \citep{buja1990remarks}. Maximal correlation can be estimated using the Bivariate ACE algorithm, which determines non-linear projections $f(X)$ and $g(Y)$ that are maximally correlated whilst having zero mean and unit variance \citep{breiman1985estimating}. 
The projections are found by iteratively computing $\EE{g(Y)|X}$ and $\EE{f(X)|Y}$, and converge to the first principal functions.
However, for large, high-dimensional datasets, iteratively computing conditional expectations is intractable. To overcome this problem, neural-based approaches such as Correlational Neural Nets \citep{chandar2016correlational}  have been proposed. 
Unlike previous efforts, the approach that we outline here allows the PICs and principal functions to be \emph{simultaneously} computed, generalizing existing methods in the literature.

Finally, we mention  Karhunen-Lo\`eve Transform, Principal Component Analysis (PCA), and its kernel version (kPCA) \citep{hoffmann2007kernel}. Similar to CA, these methods aim at representing data in terms of orthogonal (uncorrelated) components. As such, they capture the structural relationship within a high dimensional random vector of features $X$. However, these methods disregard whether these representations are relevant from the point of view of another variable $Y$. Instead, CA finds orthogonal components of both $X$ and $Y$ \emph{jointly}, with the resulting components being highly correlated. Note that this is different from performing PCA or kPCA on the joint pair $(X,Y)$, as evidenced by the derivations in Section \ref{sec:background}. Moreover, unklike kPCA, CA produces non-linear, highly correlated representations without requiring a kernel to be defined \textit{a priori}.

\subsection{Notation}
Capital letters (e.g. $X$) are used to denote random variables, and calligraphic letters (e.g. $\calX$) denote sets.
We denote the probability measure of $X\times Y$ by $\Pxy$, the conditional probability measure of $Y$ given $X$ by $\Pygx$, and the marginal probability measure of $X$ and $Y$ by $\Px$ and $\Py$ respectively.
We denote the fact that $X$ is distributed according to $\Px$ by $X \sim \Px$.
If $X$ and $Y$ have finite support sets $|\calX| < \infty$ and $|\calY| < \infty$, then we denote the joint probability mass function (pmf) of $X$ and $Y$ as $\pxy$, the conditional pmf of $Y$ given $X$ as $\pygx$, and the marginal distributions of $X$ and $Y$ as $\px$ and $\py$, respectively. 
A sample drawn from a probability distribution is denoted by lower-case letters (e.g. $x$ and $y$).
Matrices are denoted in bold capital letters (e.g. $\mathbf{X}$) and vectors in bold lower-case letters (e.g. $\mathbf{x}$). The $(i,j)$-th entry of a matrix $\mathbf{X}$ is given by $[\mathbf{X}]_{i,j}$. We denote the identity matrix of dimension $d$ by $\mathbf{I}_d$, and the all-one vector of dimension $d$ by $\mathbf{1}_d$. Given $\mathbf{v} \in \Reals^d$, we denote the matrix with diagonal entries equal to $\mathbf{v}$ by $\mathsf{diag}(\mathbf{v})$.

\section{Correspondence Analysis and the Principal Inertia Components}\label{sec:background}
In this section, we formally introduce CA, the PICs and the principal functions, as well as the connection between the PICs and CA. 

\subsection{Correspondence Analysis}\label{sec:ca}
Correspondence analysis considers two random variables $X$ and $Y$ with $|\calX| < \infty$, $|\calY| < \infty$, and pmf $\pxy$ (cf. \citet{greenacre1984theory} for a detailed overview). Given samples $\{x_k, y_k\}_{k=1}^n$ drawn independently from $\pxy$, a two-way contingency table $\mathbf{P}_{X, Y}$ is defined as a matrix with $|\calX|$ rows and $|\calY|$ columns of normalized co-occurrence counts, i.e. $[\mathbf{P}_{X,Y}]_{i,j}=(\mbox{\# of observations } (x_i,y_i)=(i,j))/n$. 
Moreover, the marginals are defined as $\mathbf{p}_X \triangleq \mathbf{P}_{X, Y} \mathbf{1}_{|\mathcal{Y}|}$ and $\mathbf{p}_Y \triangleq \mathbf{P}_{X, Y}^T \mathbf{1}_{|\mathcal{X}|}$. 
Consider a matrix 
\begin{equation}
\label{eq:Q}
\mathbf{Q}\triangleq \mathbf{D}_{X}^{-1/2}(\mathbf{P}_{X,Y}-\mathbf{p}_X\mathbf{p}_Y^T)\mathbf{D}_{Y}^{-1/2},
\end{equation}
where $\mathbf{D}_{X} \triangleq \mathsf{diag}(\mathbf{p}_X)$ and $\mathbf{D}_{Y} \triangleq \mathsf{diag}(\mathbf{p}_Y)$, and let the SVD of $\mathbf{Q}$ be $\mathbf{Q} = \bU \bSigma \bV^\intercal$. Let $d = \min\{ |\calX|, |\calY| \}-1$, and $\{\sigma_i\}_{i=1}^d$ be the singular values, then we have the following definitions \citep{greenacre1984theory}:
\begin{itemize}
    \item The orthogonal factors of $X$ are $\mathbf{L} \triangleq \mathbf{D}_{X}^{-1/2} \bU$.
    \item The orthogonal factors of $Y$ are $\mathbf{R} \triangleq \mathbf{D}_{Y}^{-1/2} \bV$.
    \item The factor scores are $\lambda_i = \sigma_i^2, 1 \leq i \leq d$.
    \item The factor score ratios are $\frac{\lambda_i}{\sum_{i=1}\lambda_i}, 1 \leq i \leq d$.
\end{itemize}
CA makes use of the orthogonal factors $\mathbf{L}$ and $\mathbf{R}$  to visualize the correspondence (i.e., dependencies), between $X$ and $Y$. In particular, the first and second columns of $\mathbf{L}$ and $\mathbf{R}$  can be plotted on a two-dimensional plane (with each row corresponding to a point) producing the so-called \emph{factoring plane}. The remaining planes can be produced by plotting the other columns of $\mathbf{L}$ and $\mathbf{R}$. The factor score ratio quantifies the variance (``correspondence'') captured by each orthogonal factor, and is often shown along the axes in factoring planes. 

We provide next the definition of the PICs, which will enable the CA decomposition in (\ref{eq:Q}) to be performed for arbitrary random variables under appropriate compactness assumptions.

\subsection{Functional Spaces and the Principal Inertia Components}
For a random variable $X$ over the alphabet $\mathcal{X}$, we let $\calL_2(P_X)$ be the Hilbert Space of all functions from $\mathcal{X} \to \mathbb{R}$ with finite variance with respect to $P_X$, i.e., $\calL_2(\Px)\triangleq \left\{f:\calX\to \Reals \;\middle| \; \EE{\|f(X)\|_2}<\infty \right\}$. 
For $f_1, f_2 \in \mathcal{L}_2(P_X)$, this Hilbert space has an associated inner product given by $\langle f_1, f_2 \rangle = \EE{ f_1(X) f_2(X)}$.
As customary, this inner product induces a distance between two functions $f_1, f_2 \in \calL_2(P_X)$, namely the Mean-Square-Error (MSE) distance given by $d(f_1,f_2) = \mathbb{E}\left[ (f_1(X) - f_2(X))^2\right]$.
One can construct the projection operator from  $\calL_2(P_X)$ to  $\calL_2(P_Y)$ by 
\begin{eqnarray}\label{eq:mmse}
    \Pi_{Y=y} [f] &\triangleq& \underset{g \in \calL_2(P_Y)}{\mathrm{argmin}} \; \mathbb{E}_{X,Y}\left[ ( f(X) - g(Y))^2|Y=y\right]\nonumber\\
    &=& \mathbb{E}[f(X)|Y = y], 
\end{eqnarray}
with adjoint operator $\Pi_{X=x}[g]=\mathbb{E}[g(Y)|X = x]$  defined for $g \in \calL_2(P_Y)$. The projection operator describes the closest function, in terms of mean-square-error, to a given function $f$ of the inputs.
Since $\calL_2(P_X)$ is a Hilbert space, there exists a basis (in fact infinitely many) through which any function $f \in \calL_2(P_X)$ can be equivalently represented.
However, at a high level, it is of interest to find a basis for $\calL_2(P_X)$, which \emph{diagonalizes} the projection operator $\Pi_Y$.

This naturally leads to the following proposition.
\begin{prop}[\citet{witsenhausen1975sequences}]
\label{prop:defnPIC}
Without loss of generality, let $|\calY| \leq |\calX|$ and let $d \triangleq |\calY| -1$, or be infinity if both sets $\calX$ and $\calY$ are infinite. There exists two sets of functions of  $\mathcal{F} = \{ f_0, f_1, \ldots,f_{d}\}\subseteq \calL_2(\Px)$ and $\mathcal{G} = \{g_0, g_1, \ldots g_{d}\}\subseteq \calL_2(\Py)$, and a set $\mathcal{S} = \{1,\lambda_1,\ldots, \lambda_{d}\}$ such that:
\begin{itemize}
    \item $f_0(X)$ and $g_0(Y)$ are constant function almost surely,  $\mathbb{E}[f_i(X) f_j(X)] = \delta_{i,j}$ and $\mathbb{E}[g_i(Y) g_j(Y)] = \delta_{i,j}$ (orthornormality).
    \item $\mathbb{E}[f_i(X)|Y=y] = \sqrt{\lambda_i}g_i(y)$, and $\mathbb{E}[g_i(Y)|X=x] = \sqrt{\lambda_i} f_i(x)$ for all $i = 1, \ldots, d$ (diagonalization).
    \item Any function $g \in \calL_2(P_Y)$ can be represented as a linear combination $g(y) = \sum_{i = 0}^{d} \beta_i g_i(y)$. Similarly, any function $f \in \calL_2(P_X)$ can be represented as a linear combination $f(x) = f^{\perp}(x) + \sum_{i = 0}^{d} \alpha_i f_i(x)$, where $f^{\perp}$ is orthogonal to all $f_i$ for all $i = 0,1,\ldots,d$ (basis).
\end{itemize}
\end{prop}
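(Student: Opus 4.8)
The plan is to recognize the three claimed properties as the singular value decomposition of the conditional-expectation operators $\Pi_Y$ and $\Pi_X$ from (\ref{eq:mmse}), and to obtain $\mathcal{F}$, $\mathcal{G}$ and $\mathcal{S}$ as the singular system of these operators. First I would verify that $\Pi_Y:\calL_2(\Px)\to\calL_2(\Py)$ and $\Pi_X:\calL_2(\Py)\to\calL_2(\Px)$ are bounded linear operators that are mutually adjoint. Adjointness is exactly the tower property of conditional expectation: for $f\in\calL_2(\Px)$ and $g\in\calL_2(\Py)$,
\begin{equation}
\langle \Pi_Y f, g\rangle = \E[f(X)g(Y)] = \langle f, \Pi_X g\rangle,
\end{equation}
so $\Pi_X=\Pi_Y^{*}$; boundedness (indeed that each operator is a contraction) follows from Jensen's inequality.

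Next I would study the composite operator $T\triangleq \Pi_X\Pi_Y=\Pi_Y^{*}\Pi_Y$ on $\calL_2(\Px)$. By construction $T$ is self-adjoint, positive semidefinite, and a contraction. When $|\calX|,|\calY|<\infty$ it has finite rank, and under the compactness assumptions invoked for the infinite case it is compact; in either situation the spectral theorem for compact self-adjoint operators yields a countable orthonormal basis of eigenfunctions $\{f_i\}$ of $\calL_2(\Px)$ with eigenvalues $\lambda_i\ge 0$, which I order decreasingly. The constant function is an eigenfunction with eigenvalue $1$, since $\Pi_Y[\mathbf 1]=\mathbf 1$ and $\Pi_X[\mathbf 1]=\mathbf 1$; because $T$ is a contraction, $1$ is the largest eigenvalue, so I may take $f_0=\mathbf 1$ and $\lambda_0=1$. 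This already delivers the orthonormality of $\mathcal F$ and identifies $f_0$.

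I would then define the $Y$-side functions by $g_i\triangleq \lambda_i^{-1/2}\Pi_Y f_i$ for every $i$ with $\lambda_i>0$ (and $g_0=\mathbf 1$). Orthonormality of $\mathcal G$ follows from adjointness and $T f_j=\lambda_j f_j$:
\begin{equation}
\langle g_i, g_j\rangle = (\lambda_i\lambda_j)^{-1/2}\langle f_i, T f_j\rangle = (\lambda_i\lambda_j)^{-1/2}\lambda_j\,\delta_{i,j}=\delta_{i,j},
\end{equation}
and the definition itself gives the first diagonalization relation $\Pi_Y f_i=\sqrt{\lambda_i}\,g_i$. The dual relation is immediate, since $\Pi_X g_i=\lambda_i^{-1/2}\Pi_X\Pi_Y f_i=\lambda_i^{-1/2}Tf_i=\sqrt{\lambda_i}\,f_i$. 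For the basis claim, the decomposition of an arbitrary $f\in\calL_2(\Px)$ as $f^{\perp}+\sum_i\alpha_i f_i$ is just orthogonal projection onto $\overline{\mathrm{span}}\{f_i\}$, and the completeness of $\{g_i\}_{i=0}^{d}$ in $\calL_2(\Py)$ follows from the dimension count $\dim\calL_2(\Py)=|\calY|=d+1$, which is where the normalization $|\calY|\le|\calX|$ and the choice $d=|\calY|-1$ enter.

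The step I expect to be the main obstacle is establishing that $\{g_i\}_{i=0}^{d}$ is a genuinely \emph{complete} basis of $\calL_2(\Py)$ and that no nonzero eigenvalue is missed, i.e.\ that the range of $\Pi_Y$ exhausts $\calL_2(\Py)$ modulo the kernel. In the finite case this is a clean rank/dimension argument that coincides with the SVD of the matrix $\mathbf Q$ in (\ref{eq:Q}), tying the proposition back to the contingency-table formulation of CA. In the infinite case the genuine work is justifying compactness of $T$ so that the spectrum is discrete and the spectral theorem applies; this is precisely the role of the compactness assumptions alluded to in the text, and I would isolate it as a hypothesis rather than attempt to derive it in full generality.
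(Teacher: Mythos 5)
Your argument is correct and coincides with the approach the paper itself relies on: the paper gives no proof of this proposition (it is imported from Witsenhausen, 1975), but its surrounding setup --- the mutually adjoint conditional-expectation operators in equation (2) and the later appeal to compactness and the Hilbert--Schmidt theorem in Section 3 --- is exactly the singular-system decomposition of $\Pi_Y$ that you carry out. The only loose ends, which you correctly identify rather than hide, are completing $\{g_i\}$ to a basis of $\calL_2(\Py)$ when some $\lambda_i = 0$ and justifying compactness of $T$ in the infinite-alphabet case; the paper likewise treats the latter as a standing assumption.
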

The functions within the sets $\mathcal{F}$ and $\mathcal{G}$ are defined here as the \emph{principal functions} of $P_{X,Y}$, and the elements of $\mathcal{S}$ as the \emph{principal inertia components} of $P_{X,Y}$.
We call $f_i(x)$ and $g_i(y)$ the $i^\text{th}$ PFs of $X$ and $Y$, and $0 \leq \lambda_i \leq 1$ the $i^\text{th}$ PIC.
Without loss of generality, we let $\lambda_1 \geq \lambda_2 \geq \ldots \geq \lambda_{d}$. Moreover, $\sqrt{\lambda_1}$ is also known as R\'enyi maximal correlation. 
A more thorough introduction to PICs can be found in \citep{witsenhausen1975sequences, buja1990remarks,du2017principal}.

\subsection{The Reconstitution Formula and Correspondence Analysis}
As illustrated in (\ref{eq:mmse}), the principal functions precisely characterize the MSE-performance of estimating a function of $X$ from an observation $Y$ (and vice-versa). 
In fact, the PICs and principal functions can be used to reconstitute the joint distribution entirely \citep[Sec. 3]{buja1990remarks}, i.e. 
\begin{eqnarray}\label{eq:P_YgX}
\frac{\pxy(x, y)}{\px(x)\py(y)} = 1 + \sum_{i=1}^d \sqrt{\lambda_i} f_i(x) g_i(y).
\end{eqnarray}
This decomposition has also appeared in the CA literature \citep[Chap. 4]{greenacre1984theory}. This reconstitution formula is key for bridging the PICs and  CA, and enables us to generalize CA to continuous variables. We make this connections precise  in the following proposition, which demonstrates that the orthogonal factors found in CA are \emph{exactly} the principal functions.
\begin{prop}
\label{prop:CA_PIC}
If $|\calX|$ and $|\calY|$ are finite, we set $[\mathbf{F}]_{i, j} = f_j(i)$, $[\mathbf{G}]_{k, j} = g_j(k)$ for $1 \leq i \leq |\calX|$, $1 \leq j \leq d$ and $1 \leq k \leq |\calY|$, and let $\mathbf{\Lambda} = \textsf{diag}(\lambda_1, \cdots, \lambda_d)$.
Moreover, let $\mathbf{L}$, $\mathbf{R}$ and $\bSigma$ follow from Section~\ref{sec:ca} and assume the diagonal entries of $\bSigma$ are in descending order. 
Then, the PFs $\mathbf{F}$ and $\mathbf{G}$ are equivalent to the orthogonal factors $\mathbf{L}$ and $\mathbf{R}$ in the CA, and the factoring scores $\bSigma$ are the same as the PICs $\mathbf{\Lambda}$.
\end{prop}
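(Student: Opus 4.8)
The plan is to recast the functional statements of Proposition~\ref{prop:defnPIC} as matrix identities and then recognize the resulting factorization of $\mathbf{Q}$ as a singular value decomposition, after which uniqueness of the SVD does the rest.

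First I would encode the orthonormality of the principal functions in matrix form. Since $[\mathbf{F}]_{i,j} = f_j(i)$ and the inner product on $\calL_2(\Px)$ is $\langle f_i, f_j\rangle = \sum_x \px(x) f_i(x) f_j(x)$, the orthonormality $\mathbb{E}[f_i(X) f_j(X)] = \delta_{i,j}$ becomes $\mathbf{F}^\intercal \mathbf{D}_X \mathbf{F} = \mathbf{I}_d$, and similarly $\mathbf{G}^\intercal \mathbf{D}_Y \mathbf{G} = \mathbf{I}_d$. Equivalently, the matrices $\mathbf{D}_X^{1/2}\mathbf{F}$ and $\mathbf{D}_Y^{1/2}\mathbf{G}$ each have orthonormal columns.

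Next I would rewrite the reconstitution formula~(\ref{eq:P_YgX}) entrywise as $\pxy(x,y) - \px(x)\py(y) = \px(x)\py(y)\sum_{i=1}^d \sqrt{\lambda_i}\, f_i(x) g_i(y)$, which in matrix form reads
\begin{equation}
\mathbf{P}_{X,Y} - \mathbf{p}_X \mathbf{p}_Y^\intercal = \mathbf{D}_X \mathbf{F}\, \mathbf{\Lambda}^{1/2}\, \mathbf{G}^\intercal \mathbf{D}_Y,
\end{equation}
with $\mathbf{\Lambda}^{1/2} = \textsf{diag}(\sqrt{\lambda_1}, \ldots, \sqrt{\lambda_d})$. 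Substituting this into the definition~(\ref{eq:Q}) of $\mathbf{Q}$ and using $\mathbf{D}_X^{-1/2}\mathbf{D}_X = \mathbf{D}_X^{1/2}$ and $\mathbf{D}_Y \mathbf{D}_Y^{-1/2} = \mathbf{D}_Y^{1/2}$ gives
\begin{equation}
\mathbf{Q} = \big(\mathbf{D}_X^{1/2}\mathbf{F}\big)\, \mathbf{\Lambda}^{1/2}\, \big(\mathbf{D}_Y^{1/2}\mathbf{G}\big)^\intercal .
\end{equation}
By the previous step the two outer factors have orthonormal columns, and $\mathbf{\Lambda}^{1/2}$ is diagonal with nonnegative entries since $0 \le \lambda_i \le 1$; hence this factorization is exactly a (thin) SVD of $\mathbf{Q}$. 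Comparing with $\mathbf{Q} = \bU \bSigma \bV^\intercal$ and invoking uniqueness of the SVD identifies $\bU = \mathbf{D}_X^{1/2}\mathbf{F}$, $\bV = \mathbf{D}_Y^{1/2}\mathbf{G}$, and $\bSigma = \mathbf{\Lambda}^{1/2}$. Rearranging the first two yields $\mathbf{F} = \mathbf{D}_X^{-1/2}\bU = \mathbf{L}$ and $\mathbf{G} = \mathbf{D}_Y^{-1/2}\bV = \mathbf{R}$, while $\bSigma = \mathbf{\Lambda}^{1/2}$ says $\sigma_i^2 = \lambda_i$, so the factor scores coincide with the PICs.

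The main obstacle is the appeal to SVD uniqueness: the decomposition is determined only up to the ordering of the singular values and, where singular values coincide, up to an orthogonal rotation within each repeated-value subspace (a sign flip in the simple case). The ordering is pinned down by the hypothesis that the diagonal of $\bSigma$ is descending together with $\lambda_1 \ge \cdots \ge \lambda_d$, so the claimed equivalence should be read up to these intrinsic ambiguities, which act identically on the CA factors and the principal functions. A secondary bookkeeping point to verify is that $\mathbf{Q}$ has at most $d = |\calY|-1$ nonzero singular values, matching the principal functions indexed $1,\ldots,d$; this holds because centering by $\mathbf{p}_X \mathbf{p}_Y^\intercal$ removes the trivial component associated with the constant functions $f_0$ and $g_0$.
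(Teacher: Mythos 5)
Your proof follows essentially the same route as the paper's: both write the reconstitution formula in matrix form, substitute into the definition of $\mathbf{Q}$, and identify the resulting factorization with the SVD $\mathbf{Q} = \bU\bSigma\bV^\intercal$. If anything, yours is the more careful rendition --- you explicitly verify that $\mathbf{D}_X^{1/2}\mathbf{F}$ and $\mathbf{D}_Y^{1/2}\mathbf{G}$ have orthonormal columns (so the factorization genuinely is a thin SVD and the uniqueness argument applies), and you keep the distinction between $\bSigma = \mathbf{\Lambda}^{1/2}$ and $\mathbf{\Lambda}$ straight, i.e.\ $\sigma_i^2 = \lambda_i$, both of which the paper's chain of equalities glosses over.
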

\begin{proof}
See Appendix~\ref{app:proof_1}.
\end{proof}

\section{The Correspondence Analysis Neural Net (CA-NN)}\label{sec:corrann}
In the previous section, we demonstrated that the orthogonal factors found via CA are equivalent to the principal functions given by the PIC decomposition of $P_{X,Y}$ (Prop. \ref{prop:CA_PIC}). Thus, we can (at least in theory)  perform CA by computing principal functions directly, without having to build a contingency table first. Principal functions, in turn, are well-defined for both discrete and continuous (or mixed) $X$ and $Y$, enabling CA to be extended to a broader range of data types. For the rest of the paper, we use the term principal functions and PICs to indicate the orthogonal factors and factor scores, respectively.

Equations (\ref{eq:mmse}), (\ref{eq:P_YgX}), and  Prop. \ref{prop:defnPIC} suggest that the principal functions can be computed for arbitrary variables by finding maximally correlated functions in $\calL_2(P_X)$ and $\calL_2(P_Y)$. Finding such functions, however, require a search over the space of all finite-variance functions of $X$ or $Y$, which is not feasible for high dimensional data. Thus, in order to approximate the principal functions and scale up CA, we restrict our search to \emph{functions representable by neural nets}. Note that the output of any neuron of a feed-forward neural net that receives $X$ as an input can be viewed\footnote{We assume that the outputs of a neural network have finite variance --- a reasonable assumption since several gates used in practice have bounded value (e.g., sigmoid, tanh) and, at the very least, the output is limited by the number of bits used in floating point representations.} as a point in $\calL_2(P_X)$ (and equivalently when receiving $Y$ as input).

In this section, we introduce the Correspondence Analysis Neural Net (CA-NN). The CA-NN estimates the PICs and the principal functions of $\Pxy$ by minimizing an appropriately defined loss function (described next) using gradient descent and backpropagation. We will use the CA-NN to perform CA at scale. 

\begin{figure*}[t!]
\centering
\includegraphics[width=.9\textwidth]{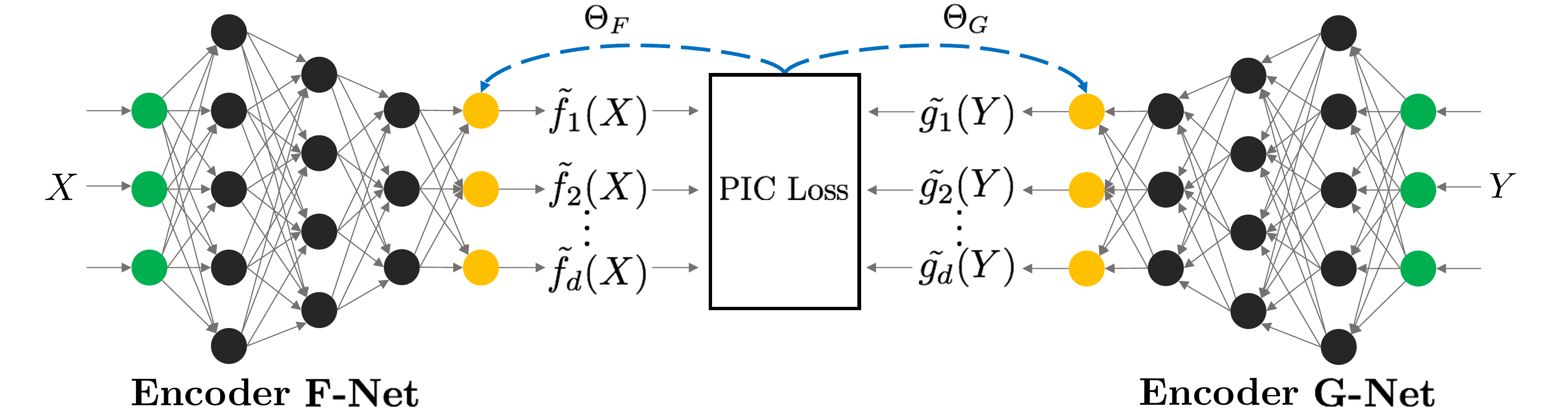}
\caption{{\small The architecture of the CA-NN, consisting of two encoders F-Net and G-Net for $X$ and $Y$ respectively to estimate the principal functions. The PIC loss is given by (\ref{opti4}).}}
\label{fig:fg_nets}
\end{figure*}

\subsection{Method}
For two random variables $(X,Y)$, we denote the $d$ principal functions of $X$ and $Y$, respectively, as
\begin{align}
    \mathbf{\tilde{f}}(X) &\triangleq [\tilde{f}_1(X), \cdots, \tilde{f}_d(X)]^\intercal \in \Reals^{d\times 1},\\
    \mathbf{\tilde{g}}(Y) &\triangleq [\tilde{g}_1(Y), \cdots, \tilde{g}_d(Y)]^\intercal \in \Reals^{d\times 1}.
\end{align}
Under these assumptions, the solution of the optimization problem
\begin{equation}\label{opti2}
\begin{aligned}
\min\limits_{\bA \in \Reals^{d\times d},\mathbf{\tilde{f}},\mathbf{\tilde{g}}} &\; \mathbb{E}\left[\|\bA\mathbf{\tilde{f}}(X)-\mathbf{\tilde{g}}(Y)\|^2_2\right]\\
\text{subject to}&\; \mathbb{E}\left[\bA\mathbf{\tilde{f}}(X)(\bA\mathbf{\tilde{f}}(X))^\intercal \right] = \mathbf{I}_d
\end{aligned}
\end{equation}
recovers the $d$ largest PICs.
To see why this is the case, let
\begin{equation}
    \bff(X)=\bA\mtf(X)=[\bff_1(X),\cdots,\bff_d(X)]^\intercal,
\end{equation}
and suppose that $\bff,\mtg$ and $\bA$ achieve optimality in (\ref{opti2}). Optimality under quadratic loss implies that $\tilde{g}_i(y)=\EE{f_i(X)\mid Y=y}$ for $i\in \{1,\dots,d\}$. Moreover, the orthogonality constraint assures that the entries of $\bff(X)$ satisfy $\EE{f_i(X)f_j(X)}=\delta_{i,j}$, and thus form a basis for a  $d$-dimensional subspace of $\calL_2(\Px)$. As discussed in Section~\ref{sec:background}, conditional expectation on $Y$ is a compact operator from $\calL_2(\Px)\to \calL_2(\Py)$, and from orthogonality of $\bff(X)$, it follows directly from the Hilbert-Schmidt Theorem \citep{reed1980functional} that the optimal value of (\ref{opti2}) is $\sum_{i=0}^{d-1}\lambda_i$, with $\bff$ and $\mtg$ corresponding to the $d$ largest principal functions.

We can further simplify the objective function in (\ref{opti2}) using the following proposition. 
\begin{prop}\label{prop:opti}
The minimization in (\ref{opti2}) is equivalent to the following unconstrained optimization problem. 
\begin{equation}\label{opti4}
\begin{aligned}
\min\limits_{\mathbf{\tilde{f}}, \mathbf{\tilde{g}}} && - 2\|\bC_f^{-\frac{1}{2}}\bC_{fg}\|_d + \mathbb{E}[\|\mathbf{\tilde{g}}(Y)\|^2_2],
\end{aligned}
\end{equation}
where $\bC_f = \mathbb{E}[ \mathbf{\tilde{f}}(X)\mathbf{\tilde{f}}(X)^\intercal ]$, $\bC_{fg} = \mathbb{E}[ \mathbf{\tilde{f}}(X)\mathbf{\tilde{g}}(Y)^\intercal ]$, and $\|\bZ\|_d$ is the $d$-th Ky-Fan norm, defined as the sum of the singular values of $\bZ$ \citep[Eq. (7.4.8.1)]{horn1990matrix}. Denoting by $\bA$ and $\bB$ the whitening matrices\footnote{We call $\bA$ and $\bB$ the whitening matrices since in Proposition~\ref{prop:defnPIC}, it is cleat that the covariance matrices of $\bff(X)$ and $\bg(Y)$ should both be identity matrices.}
for $\mathbf{\tilde{f}}(\bX)$ and $\mathbf{\tilde{g}}(\bY)$, the principal functions are given by $\bff(X) = [f_0(X), \cdots, f_d(X)]^\intercal = \bA\mtf(X)$ and $\bg(Y) = [g_0(Y), \cdots, g_d(Y)]^\intercal = \bB\mtg(Y)$.
\end{prop}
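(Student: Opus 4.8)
The plan is to eliminate the matrix $\bA$ from the constrained program (\ref{opti2}) in closed form for fixed $\mtf,\mtg$, and then to recognize the resulting inner optimum as the Ky--Fan term appearing in (\ref{opti4}). Since the joint minimization factors as $\min_{\mtf,\mtg}\min_{\bA}$, it suffices to carry out the inner minimization over $\bA$ analytically.

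First I would expand the squared norm and take expectations, using the cyclic property of the trace together with the definitions $\bC_f=\mathbb{E}[\mtf(X)\mtf(X)^\intercal]$ and $\bC_{fg}=\mathbb{E}[\mtf(X)\mtg(Y)^\intercal]$, to write the objective as
\[
\mathbb{E}\!\left[\|\bA\mtf(X)-\mtg(Y)\|_2^2\right]=\tr(\bA^\intercal\bA\,\bC_f)-2\,\tr(\bA\,\bC_{fg})+\mathbb{E}[\|\mtg(Y)\|_2^2].
\]
The constraint reads $\bA\bC_f\bA^\intercal=\mathbf{I}_d$, and by the cyclic property $\tr(\bA^\intercal\bA\,\bC_f)=\tr(\bA\bC_f\bA^\intercal)=\tr(\mathbf{I}_d)=d$ is constant on the feasible set. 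Hence, for fixed $\mtf,\mtg$, the inner minimization reduces to the trace maximization $\max_{\bA}\tr(\bA\,\bC_{fg})$ subject to $\bA\bC_f\bA^\intercal=\mathbf{I}_d$, with the term $\mathbb{E}[\|\mtg(Y)\|_2^2]$ carried through unchanged.

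Next I would solve this maximization. Feasibility of the constraint forces $\bC_f\succ0$ and $\bA$ invertible, so the substitution $\bB=\bA\bC_f^{1/2}$ turns the constraint into $\bB\bB^\intercal=\mathbf{I}_d$ (i.e.\ $\bB$ orthogonal) and the objective into $\tr(\bB\bM)$ with $\bM\triangleq\bC_f^{-1/2}\bC_{fg}$. The crux is the von Neumann trace inequality (equivalently the orthogonal Procrustes solution): over orthogonal $\bB$, $\max_{\bB}\tr(\bB\bM)=\sum_i\sigma_i(\bM)$, attained at $\bB=\bV\bU^\intercal$ for an SVD $\bM=\bU\bSigma\bV^\intercal$. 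Because $\bM$ is $d\times d$, this sum of all its singular values is precisely the $d$-th Ky--Fan norm $\|\bC_f^{-1/2}\bC_{fg}\|_d$.

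Substituting back, the optimal value of (\ref{opti2}) at fixed $\mtf,\mtg$ equals $d-2\|\bC_f^{-1/2}\bC_{fg}\|_d+\mathbb{E}[\|\mtg(Y)\|_2^2]$; dropping the additive constant $d$, which does not affect the minimizing arguments, the outer minimization over $\mtf,\mtg$ is exactly (\ref{opti4}), establishing the equivalence. Moreover, the optimizer $\bA=\bV\bU^\intercal\bC_f^{-1/2}$ is the announced whitening map (it enforces $\bA\bC_f\bA^\intercal=\mathbf{I}_d$), and the symmetric argument on $Y$ identifies $\bB$; combined with the Hilbert--Schmidt characterization already established before the proposition, the whitened functions $\bA\mtf(X)$ and $\bB\mtg(Y)$ are the principal functions. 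The main obstacle I anticipate is the trace-maximization step: one must invoke von Neumann's inequality correctly, and to justify the substitution $\bB=\bA\bC_f^{1/2}$ one needs $\bC_f\succ0$ (which is in fact necessary for the constraint to be feasible) — if $\bC_f$ is only positive semidefinite, the same argument goes through after restricting to the range of $\bC_f$ and replacing $\bC_f^{-1/2}$ by a pseudo-inverse.
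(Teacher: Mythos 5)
Your proposal is correct and follows essentially the same route as the paper's proof: expand the quadratic objective in traces, use the constraint to make the $\tr(\bA\bC_f\bA^\intercal)=d$ term constant, reparametrize by $\bA=\tilde{\bA}\bC_f^{-1/2}$ so the constraint becomes orthogonality, and apply von Neumann's trace inequality / the orthogonal Procrustes solution to identify the maximum of $\tr(\tilde{\bA}\bC_f^{-1/2}\bC_{fg})$ with the $d$-th Ky--Fan norm. Your added remarks --- that feasibility forces $\bC_f\succ0$ and that the dropped additive constant $d$ does not affect the minimizers --- are small points the paper leaves implicit.
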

\begin{proof}
See Appendix~\ref{app:proof_2}.
\end{proof}
The proof of this propositon is related to the orthogonal Procrustes problem \citep{gower2004procrustes}, which convergence properties have been studied in \citep{nie2017generalized}.

\subsection{Implementation}\label{sec:implementation}
Observe that (\ref{opti4}) is an unconstrained optimization problem over the space of all finite variance functions of $X$ and $Y$. As discussed in the introduction of this section, we restrict our search to functions given by outputs of neural nets, parameterizing $\mathbf{\tilde{f}}(X)$ and $\mathbf{\tilde{g}}(Y)$ by  $\theta_F$ and $\theta_G$, respectively. Here, $\theta_F$ and $\theta_G$ denote weights of two neural nets, called the F-net and the G-net (Fig. \ref{fig:fg_nets}). The F-Net and the G-Net encode $X$ and $Y$ to $\Reals^d$, respectively. The parameters of each network can be fit using gradient-based back-propagation of the objective (\ref{opti4}), as described next.

Given $n$ samples $\{x_k, y_k\}_{k=1}^n$ from $\Pxy$, we denote $\bx_n \triangleq [x_1, \cdots, x_n]$, $\by_n \triangleq [y_1, \cdots, y_n]$, $\tilde{\bF}_n(\bx_n) = [\mathbf{\tilde{f}}(x_1, \theta_F), \cdots, \mathbf{\tilde{f}}(x_n, \theta_F)]^\intercal \in \Reals^{d\times n}$ and $\tilde{\bG}_n(\by_n) = [\mathbf{\tilde{g}}(y_1, \theta_G), \cdots, \mathbf{\tilde{g}}(y_n, \theta_G)] \in \Reals^{d\times n}$. The empirical evaluations of the terms in (\ref{opti4}) are
\begin{subequations}
\begin{eqnarray}
\bC_f &\approx& \frac{1}{n} \tilde{\bF}_n(\bx_n, \theta_F) \tilde{\bF}_n(\bx_n, \theta_F)^\intercal, \\
\bC_{fg} &\approx& \frac{1}{n} \tilde{\bF}_n(\bx_n, \theta_F) \tilde{\bG}_n(\by_n, \theta_G)^\intercal, \\
\mathbb{E}[\|\mathbf{\tilde{g}}(Y)\|^2_2] &\approx& \frac{1}{n} \sum_{i=1}^n\sum_{j=1}^d \mathbf{\tilde{g}}_j(y_i, \theta_G)^2.
\end{eqnarray}
\end{subequations}

After extracting $\tilde{\bF}_n(\bx_n)$ and $\tilde{\bG}_n(\by_n)$ from the F-Net and G-Net, respectively, Proposition~\ref{prop:opti} suggests that the principal functions can be recovered by producing whitening matrices $\bA$ for $\mtf$ and $\bB$ for $\mtg$.
Without loss of generality, we will assume that $\tilde{\bF}_n(\bx_n)$ and $\tilde{\bG}_n(\by_n)$ have zero-mean columns, which can always be done by subtracting the column-mean element-wise. Then $\bA$ is given by $\bA = \bU^\intercal \bC_f^{-1/2} $, with $\bU$ the left singular vectors of the matrix 
\begin{equation}
    \bL = \frac{1}{n} (\bC_f^{-1/2} \tilde{\bF}_n(\bx_n))(\bC_g^{-1/2} \tilde{\bG}_n(\by_n))^\intercal.
\end{equation}
The matrix $\bC_f^{-1/2}$ guarantees that $\tilde{\bF}_n(\bx_n)$ has orthonormal columns, while $\bU$ rotates the set of vectors to align with $\tilde{\bG}_n(\by_n)$.
By symmetry, $\bB = \bV^\intercal \bC_g^{-1/2}$, where $\bV$ are the right singular vectors of $\bL$. The produced matrices $\bF_n(\bx_n) = \bA \tilde{\bF}_n(\bx_n)$ and $\bG_n(\by_n) = \bB \tilde{\bG}_n(\by_n)$ satisfy
\begin{equation}
    \frac{1}{n} \bF_n(\bx_n)^\intercal\bF_n(\bx_n) = \frac{1}{n} \bG_n(\by_n)^\intercal \bG_n(\by_n) = \bI_d,
\end{equation}
and $\frac{1}{n}\bF_n(\bx_n)^\intercal\bG_n(\by_n) = \mathbf{\Lambda}$ being the diagonal matrix with the estimated PICs. It should be emphasized that in the implementation and subsequent experiments, we estimate the whitening matrices $\bA$ and $\bB$ on the training set alone prior to evaluation on the test set. 
For clarity, we summarize this whitening process in Appendix~\ref{app:algo}.

\section{Experiments}\label{sec:exp}
The experiments contain two parts. First, we apply the CA-NN to synthetic data where the PICs and principal functions can be computed analytically, and demonstrate that the proposed method recovers the values predicted by theory.
Second, we use the CA-NN to perform CA on two real-world datasets where contingency table-based CA fails: the Kaggle What's Cooking Recipes \citep{kaggle_what_cooking} and UCI Wine Quality data \citep{asuncion2007uci}.  In particular, the UCI Wine Quality dataset includes a mixture of discrete and continuous variables, demonstrating the versatility of the proposed methods. We select these datasets since their features naturally lend themselves to interpretable visualizations of the underlying dependencies in the data by factoring planes. In order to demonstrate that the CA-NN can be used to perform CA at an unprecedented scale, we also apply this method to image datasets (MNIST \citep{lecun1998gradient} and CIFAR-10 \citep{krizhevsky2009learning}), albeit these experiments do not lend themselves to the same kind of interpretable analysis found in the food related datasets. Detailed experimental setups (e.g., architecture of the CA-NN, training details, depths of encoders etc.) are provided in the Appendix~\ref{app:detail}, and an additional experiment on multi-modal Gaussian is provided in the Appendix~\ref{app:add_exp}.
All the $95\%$-confidence intervals of the estimation of the PICs in Tables~\ref{tab:pic_synthetic} and \ref{tab:pic_kaggle} are less than $1\%$ for CA-NN. 
\begin{figure}[!tb]
\centering
\includegraphics[width=.9\textwidth]{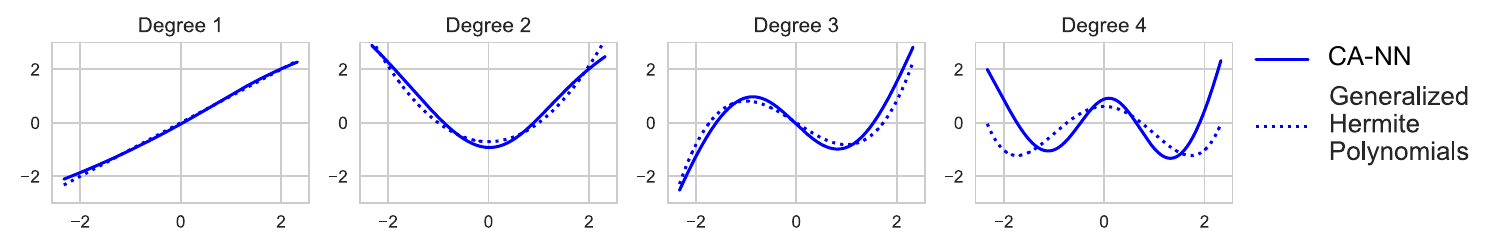}
\caption{{\small CA-NN recovers the Hermite polynomials, the principal functions in the Gaussian example.}}
\label{fig:hermite}
\end{figure}
\begin{table}[t]
\caption{{\small CA-NN reliably approximates the top four PICs in discrete and Gaussian cases.}}
\label{tab:pic_synthetic}
\begin{center}
\begin{tabular}{llllllllll}
& \multicolumn{4}{c}{\bf Discrete PICs} & \multicolumn{4}{c}{\bf Gaussian PICs}
\\ \hline \\
CA-NN      & $0.8011$ & $0.7942$ & $0.7918$ & $0.7883$ & $0.7007$ & $0.4938$ & $0.3376$ & $0.2037$ \\
Analytic value & $0.8000$ & $0.8000$ & $0.8000$ & $0.8000$ & $0.6977$ & $0.4675$ & $0.2979$ & $0.2113$ \\
\end{tabular}
\end{center}
\end{table}

\subsection{Synthetic Data}\label{sec:synthetic}
We demonstrate next through two examples --- one on discrete data and one on continuous data --- that the CA-NN is able to reliably recover the PICs and the principal functions predicted by theory.

\subsubsection{Discrete Synthetic Data}
We consider $X \sim Bernoulli(p)$, $Z \sim Bernoulli(\delta)$, and $Y = X \oplus Z$, where $\oplus$ is the exclusive-or operator and $\delta$ is the crossover probability. Note that $Y$ can be viewed as the output of a  discrete memoryless binary symmetric channel (BSC) \citep{cover2012elements} with input $X$. For any additive noise binary channel, the  PICs can be mathematically determined \citep{o2014analysis, du2017principal}. We set $X$ to be a binary string of length $5$, $\delta = 0.1$, and $p = 0.1$. The results in Table~\ref{tab:pic_synthetic} show that the CA-NN reliably approximates the PICs, and we observed this consistent behaviour over multipe runs of the experiment. Details (including analytical expressions for the PICs and principal functions)  are given in the Appendix.

\subsubsection{Gaussian Synthetic Data}
When $X \sim \calN(0, \sigma_1^2\mI_n)$, $Y|X \sim \calN(X, \sigma_2^2\mI_n)$, the set of principal functions are the Hermite polynomials \citep{abbe2012coordinate}. More precisely, if the $i^\text{th}$ degree Hermite polynomial is given by 
\begin{equation}
    H_i^{(r)}(x) \triangleq \frac{(-1)^i}{\sqrt{i!}} e^{\frac{x^2}{2r}} \frac{d^i}{dx^i} e^{-\frac{x^2}{2r}}, r \in (0, \infty),
\end{equation}
 then the $i^\text{th}$ principal functions $f_i$ and $g_i$ are $H_i^{(\sigma_1)}$ and $H_i^{(\sigma_1+\sigma_2)}$ respectively, and the $i^\text{th}$ PIC can then be given by their inner product. 
We pick $\sigma_1 = \sigma_2 = 1$, and show estimation of PICs and principal functions in Table~\ref{tab:pic_synthetic} and Fig.~\ref{fig:hermite}. Observe that the CA-NN closely approximates the first Hermite polynomials.

\subsection{Real-World Data}\label{sec:real_data}
We first investigate two datasets, Kaggle What's Cooking Recipes\citep{kaggle_what_cooking} and UCI Wine Quality \citep{asuncion2007uci}. These dataset contain highly non-linear dependencies which are interpretable via CA. In order to illustrate that we can perform CA on high-dimensional, continuous data, we conclude this section by applying the CA-NN to two image datasets.

\subsubsection{Kaggle What's Cooking Recipe Data}
\begin{table*}[!tb]
\caption{{\small CA-NN outperforms contingency table-based CA (SVD) and CCA/ KCCA (which also produces transformation of $X$ and $Y$, cf. Section~\ref{sec:related_works}) on Kaggle What's Cooking dataset to explore dependencies in samples.}}
\label{tab:pic_kaggle}
\begin{center}
\begin{tabular}{lclclclclclclclclclcl}
\multicolumn{1}{c}{\bf }  &\multicolumn{10}{c}{\bf Top ten principal inertia components} 
\\ \hline \\
CA-NN &  \boldsymbol{$0.9092$} & \boldsymbol{$0.8667$} & \boldsymbol{$0.8412$} & \boldsymbol{$0.7932$} & \boldsymbol{$0.7391$} & \boldsymbol{$0.6413$} & \boldsymbol{$0.6018$} & \boldsymbol{$0.4792$} & \boldsymbol{$0.4508$} & \boldsymbol{$0.2821$} \\
SVD  & $0.4504$ & $0.3894$ & $0.3149$ & $0.2943$ & $0.2413$ & $0.1958$ & $0.1547$ & $0.1191$ & $0.1146$ & $0.1035$ \\
\multicolumn{1}{c}{\bf }  &\multicolumn{10}{c}{\bf Correlations between transformed samples} 
\\ \hline \\
CCA  & $0.1915$ & $0.1751$ & $0.1342$ & $0.1083$ & $0.1050$ & $0.0823$ & $0.0623$ & $0.0488$ & $0.0485$ & $0.0431$ \\
KCCA & $0.6585$ & $0.1223$ & $0.0860$ & $0.0636$ & $0.0320$ & $0.0131$ & $0.0090$ & $0.0089$ & $0.0051$ & $0.0011$ \\
\end{tabular}
\end{center}
\end{table*}
\begin{figure*}[!tb]
\centering
\includegraphics[width=.9\textwidth]{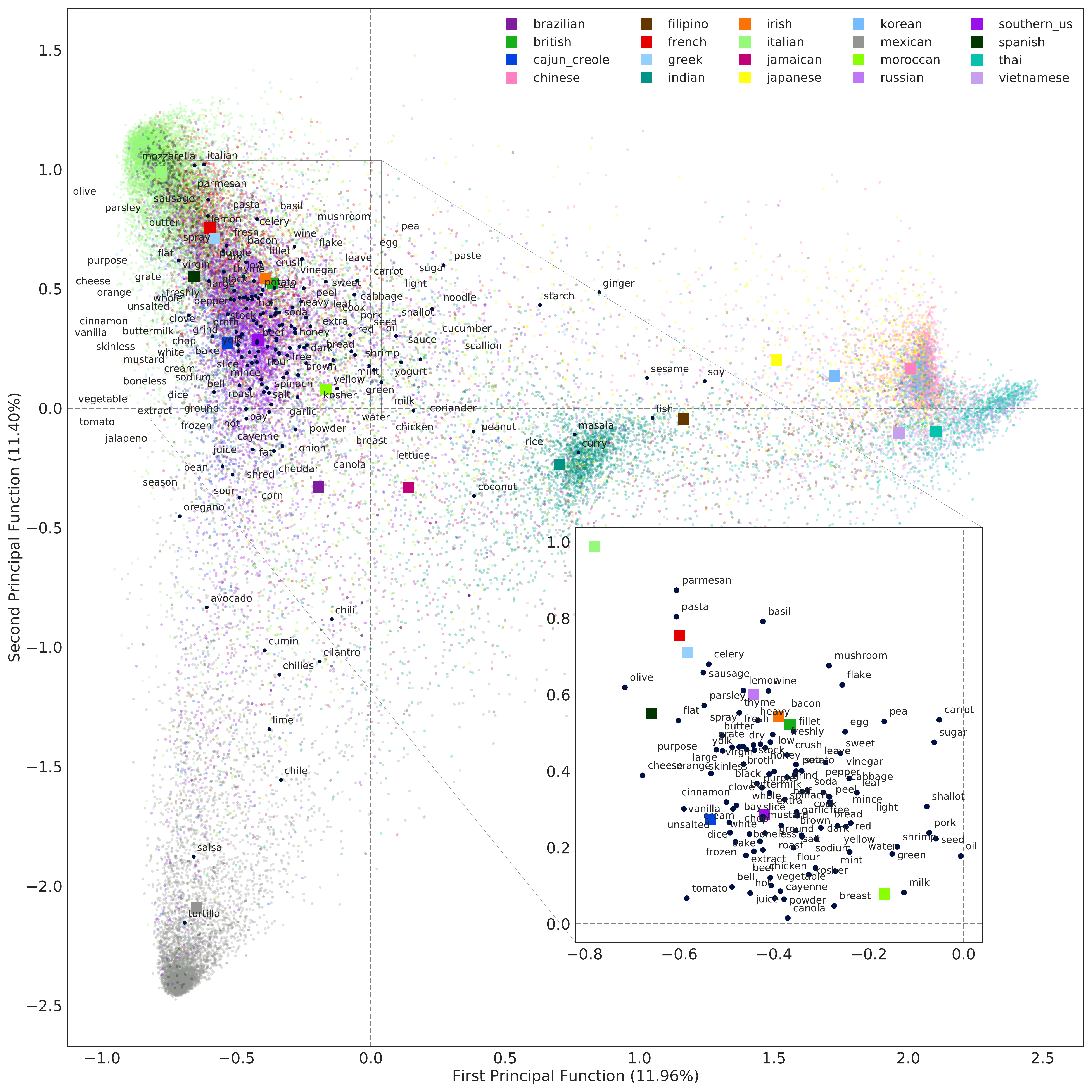}
\caption{{\small The first factoring plane of CA on Kaggle What's cooking dataset (Colored dots: recipe, dark blue: ingredient).}}
\label{fig:kaggle_ca}
\end{figure*}

\begin{figure}[!tb]
\centering
\includegraphics[width=.9\textwidth]{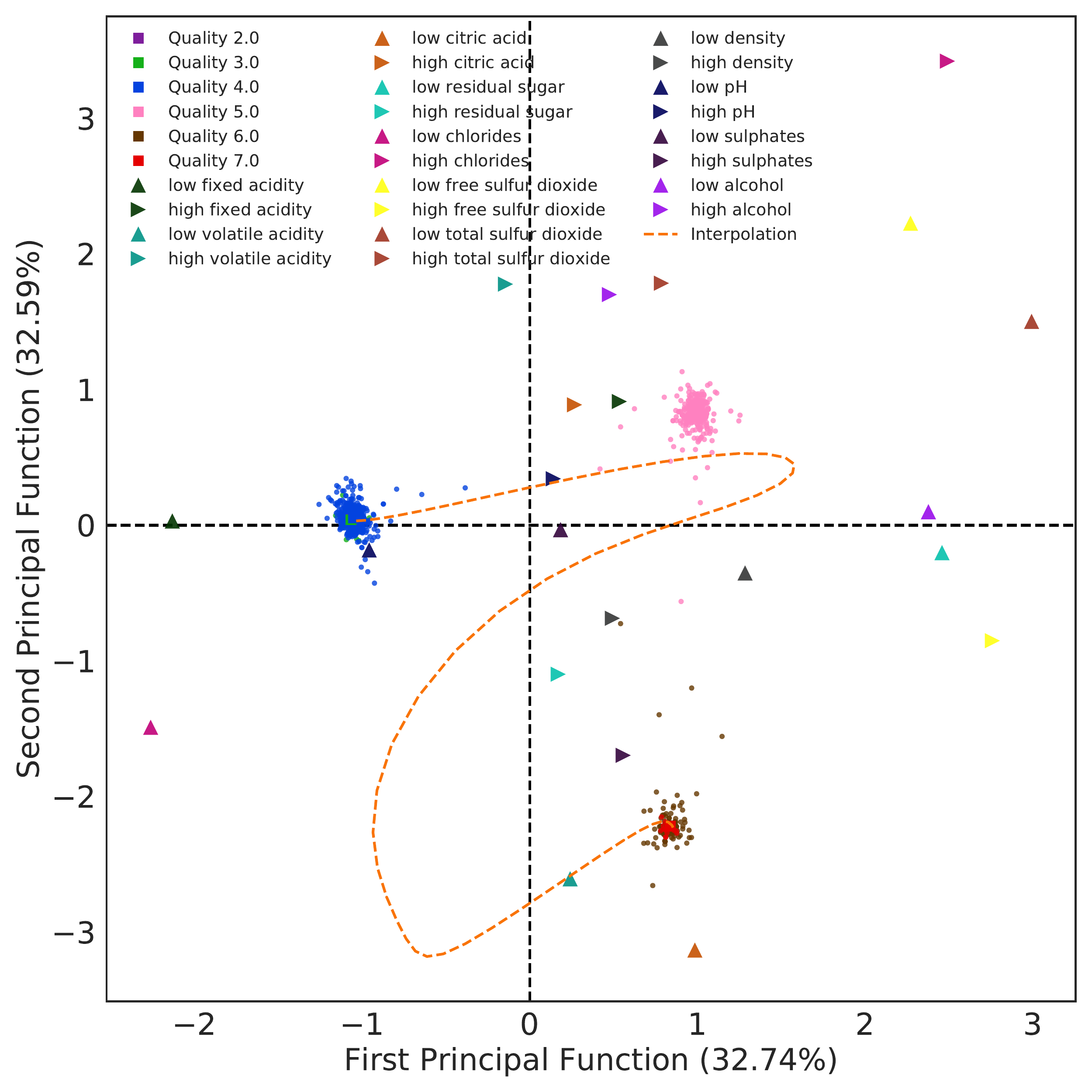}
\caption{{\small The first factoring plane of CA on UCI wine quality dataset.}}
\label{fig:wine_ca}
\end{figure}
The Kaggle What's cooking dataset \citep{kaggle_what_cooking} contains $39774$ recipes as $X$, composed of $6714$ ingredients (e.g. peanuts, sesame, beef, etc.), from $20$ types of cuisines as $Y$ (e.g. Japanese, Greek, Southern US, etc.). 
The recipes are given in text form, so we pre-process the data in order to combine variations of the same ingredient and, for the sake of example and interpretable visualizations, keep only the $146$ most common ingredients. 
In Table~\ref{tab:pic_kaggle}, we show that CA-NN outperforms contingency table-based CA using SVD\footnote{We only consider combinations of ingredients observed in the dataset as possible outcomes of $X$.}, with the resulting PICs being more correlated (i.e., achieving a higher value of Eq. (\ref{opti4})) than its contingency table-based counterpart. 

In Table~\ref{tab:pic_kaggle}, we compare the CA-NN with baseline techniques such as CCA and KCCA (with radial basis function kernels).
The resulting low-dimensional representation produced by CA-NN captures a higher correlation/variance than these other embeddings. We recognize that these results may vary if other kernels were selected, but note that the CA-NN \emph{does not require any form of kernel selection by a human prior to application}. 
In Fig.~\ref{fig:kaggle_ca} we display a traditional CA-style plot produced using CA-NN, showing the first factoring plane of the CA (i.e., the first and second principal functions for $X$ and $Y$). 
The intersection of two dashed lines ($x=0$ and $y=0$) indicates the space where $X$ and $Y$ have insignificant correlation.

There are three key observations which can be extracted from Fig.~\ref{fig:kaggle_ca}.
First, we observe clear clusters under the representation learned by CA-NN which can be easily interpreted. The cluster on the right-hand side represents East-Asian cuisines (e.g. Chinese, Korean), the one on the left-hand side represents Western cuisine (e.g. French, Italian) and in between sits Indian cuisine. 
Second, we observe that the first principal function learns to distinguish Asian cuisine (e.g. Chinese, Korean) from Western cuisine, naturally separating these contrasting culinary cultures. Interestingly, Indian cuisine sits in between Asian and Western cuisine, and  Filipino cuisine  is represented between Indian and Asian cuisine over this axis. The second principal function further indicates finer differences between Western cuisines, and singles out Mexican cuisine. 
Third, by plotting the ingredients on this plane (i.e., recipes containing only one ingredient), we can determine \emph{signature} ingredients for different kinds of cuisines. For example, despite the fact that ginger is in both Western and Asian dishes, it is closer in the factor plane to Asian cuisine, revealing that it plays a more prominent role in this cuisine. Some ingredients share much stronger correlation with the cuisine type, e.g. curry in Indian dishes and tortilla in Mexican ones.
See the Appendix for additional factorial planes.

\subsubsection{UCI Wine Quality Data}
The UCI wine quality dataset contains $4898$ red wines with $11$ physico-chemical attributes (e.g. pH value, acid, alcohol) and $6$ levels of qualities (from 2 to 7). 
We set $X$ to be the $11$ attributes and $Y$ be the quality, and report the results of CA in Fig.~\ref{fig:wine_ca}. 
Note that since the attributes are continuous, performing contingency table-based CA is not well-defined for this case.
In Fig.~\ref{fig:wine_ca}, we can see that despite the existence of $6$ classes of qualities, the principal functions discover three sub-clusters, namely poor quality (less or equal to 4), medium quality (equal to 5), and high quality (6 and above)
Moreover, Fig.~\ref{fig:wine_ca} also shows how the attributes affects the quality of a wine. For example, high quality wines (quality $6$ and $7$) tends to have low citric and volatile acidity, but with high sulphates. 
Finally, we randomly sample a low quality and a high quality wine, and take the linear interpolation of their features. We represent the path that this linear interpolation draws in the factorial plane by the orange line in Fig.~\ref{fig:wine_ca}. This sheds light on how a ``bad'' wine can be transformed into a ``good'' wine.

\subsubsection{MNIST and CIFAR-10}
In Fig.~\ref{fig:mnist_cifar10_ca}, we show that CA-NN enables CA on image datasets such as noisy MNIST \citep{wang2015deep} and CIFAR-10 \citep{krizhevsky2009learning} datasets.
Noisy MNIST is a more challenging version of MNIST \citep{lecun1998gradient}, containing $60000$ gray-scale images for training and $10000$ for test, where each image is a $28\times 28$ pixels handwritten digit with random rotation and noise; CIFAR-10 contains $50000$ colored images for training and $10000$ for test in $10$ classes, where each images has $32\times 32$ pixels.
Since the features (pixels) of these images are not very informative, we do not show them in Fig.~\ref{fig:mnist_cifar10_ca}.

\begin{figure}[t!]
\centering
\includegraphics[width=.8\textwidth]{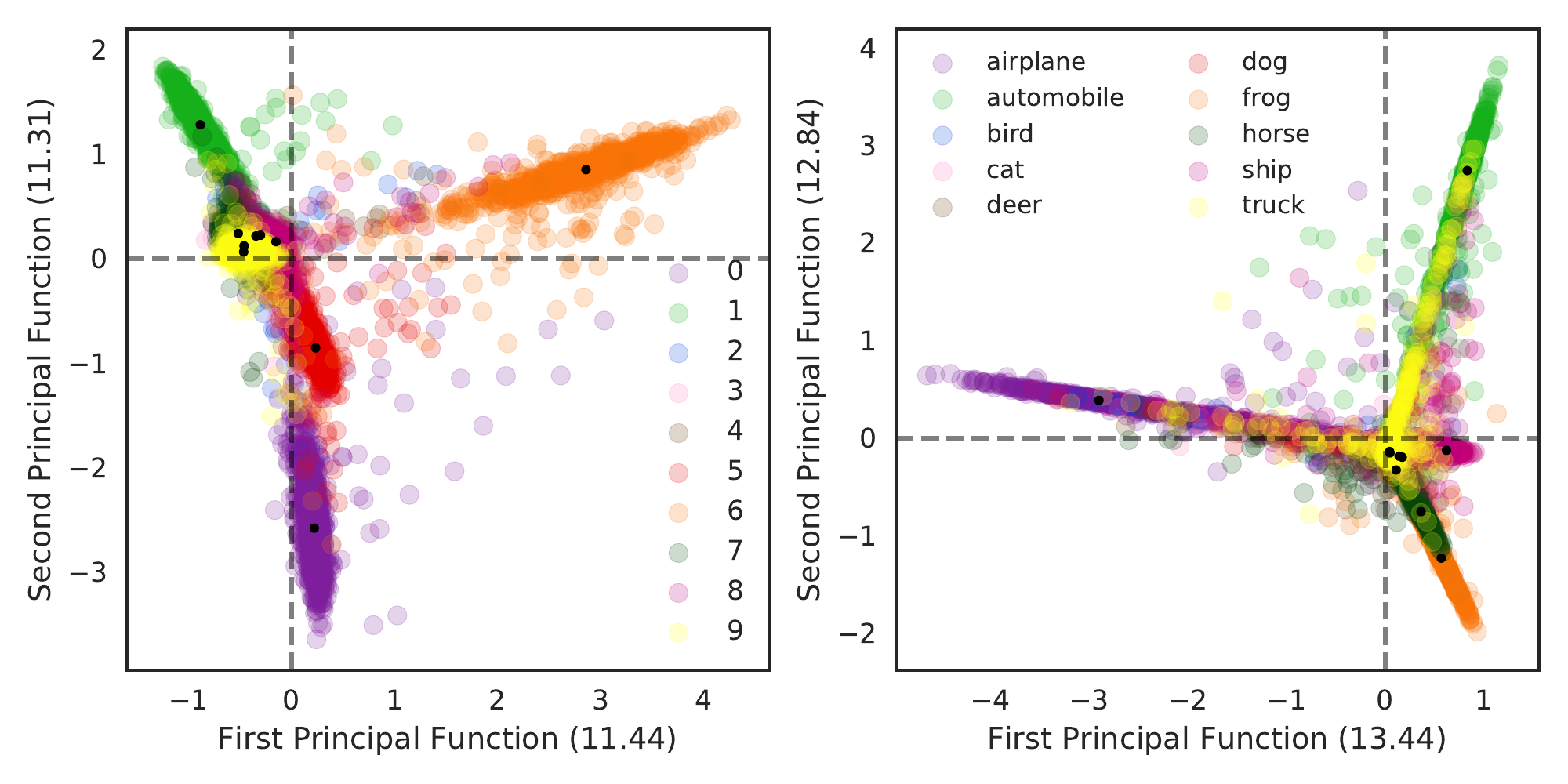}
\caption{{\small The first factoring planes of noisy MNIST (left) and CIFAR-10 (right).}}
\label{fig:mnist_cifar10_ca}
\end{figure}

\section{Conclusion}\label{sec:discussion}
We proposed a neural-based estimator for the principal inertia components and the principal functions, called the Correspondence Analysis Neural Network (CA-NN). By proving that the principal functions are equivalent to orthogonal factors in CA, we are able to use the CA-NN to scale up CA to large, high-dimensional datasets with continuous features. We validated the CA-NN on synthetic data, and showed how it enables CA to be performed on large real-world datasets. These experiments indicate that CA-NN significantly outperforms other approaches of CA. Future research directions include characterizing generalization properties of CA-NN in terms of the number of training samples, as well as the impact of network architecture on the resulting principal functions. We hope that the CA-NN can allow CA to be more widely applied to the large datasets found in the current machine learning landscape.

\bibliographystyle{apalike}
\bibliography{aistats2019.bib}

\clearpage
\appendix
\section{Experimental Details}\label{app:detail}
\subsection{Discrete Synthetic Data: Binary Symmetric Channels}
Explicit calculation of PICs between two given random variables is challenging in general; however, for some simple cases, e.g. $\Pygx$ given by a so-called discrete memoryless Binary Symmetric Channel (BSC), the PICs can be derived exactly \citep[Section~3.5]{du2017principal} or \citep[Section~2.4]{o2014analysis}. 
Let $X$ be a binary string of length $n$, and consider a binary string $Y$ of the same length, where each bit is flipped independently with probability $\delta$. 
The parameter $\delta$, called the \emph{crossover probability}, captures how noisy the mapping from $X$ to $Y$ is. 
By symmetry it is sufficient to let $\delta \leq 1/2$. The PICs between $X$ and $Y$ are characterized below: there are $\binom{n}{k}$ PICs of value $(1-2\delta)^k$. For example, for $n = 5$ and $\delta = 0.1$, there are $\binom{5}{0} = 1$ PIC of value $(1-0.2)^0 = 1$, $\binom{5}{1} = 5$ PICs of value $(1-0.2)^1 = 0.8$, $\binom{5}{2} = 10$ PICs of value $(1-0.2)^2 = 0.64$, and so on.  

For this experiment, we randomly generate $15000$ binary strings for training and $1500$ strings for testing.
The CA-NN is composed of simple neural nets with two hidden layers with ReLU activation, and $32$ units per hidden layer. We train over the entire training set for $2000$ epochs using a gradient descent optimizer with learning rate $0.01$.
The approximated PICs for the training and test set, along with the PICs values obtained analytically from theory are shown in Figure~\ref{fig:bsc_pics}.  The approximated PICs are close to the theoretical values, verifying that the CA-NN is valid in this example. 

We also show the factoring planes under different crossover probability $\delta$ in Figure~\ref{fig:bsc_delta}. When $\delta = 0.1$, most bits are identical between $X$ and $Y$, while when $\delta = 0.9$ most of the bits are flipped.

\subsection{Gaussian Synthetic Data and Hermite Polynomials}
When $\calX = \calY = \Reals$, $X \sim \calN(0, \sigma_1)$, $Z \sim \calN(0, \sigma_2)$ and $Y = X + Z$, the set of functions $\calF$ and $\calG$ that give the PICs are the Hermite polynomials \citep{abbe2012coordinate}, where for $x\in \Reals$, the Hermite polynomial $H_i(x)$ of degree $i \geq 0$ is defined as 
\begin{equation}
    H_i(x) \triangleq (-1)^i e^{\frac{x^2}{2}} \frac{d^i}{dx^i} e^{-\frac{x^2}{2}}.
\end{equation}
More precisely, the $i^\text{th}$ principal functions $f_i$ and $g_i$ are $H_i^{(\sigma_1)}$ and $H_i^{(\sigma_1+\sigma_2)}$ respectively, where $H_i^{(r)}$ denotes the generalized Hermite polynomial, defined as $H_i^{(r)}(x) = \frac{1}{\sqrt{i!}} H_i(\frac{x}{\sqrt{r}})$,
of degree $i$ with respect to the Gaussian distribution $\calN(0, r)$, for $r \in (0, \infty)$.
The PICs will then be given by the associated inner product $\mathbb{E}[H_i^{(\sigma_1)}(X)H_i^{(\sigma_1+\sigma_2)}(Y)]$.

We pick $\sigma_1 = \sigma_2 = 1$, and generate $5000$ training samples for $X$ and $Y$ according to the Gaussian distribution and $1000$ test samples.
The CA-NN is composed of two hidden layers with hyperbolic tangent activation, $30$ units per hidden layer.
We train over the entire training set for $8000$ epochs using a gradient descent optimizer with learning rate $0.01$.

In Figure~\ref{fig:hermite_4}, we show the Hermite polynomials of degrees $0$ to $4$ and the outputs of the CA-NN that approximate the $0^\text{th}$ to $4^\text{th}$ principal functions.
The output of the CA-NN closely recovers the Hermite polynomials; this can be further verified by computing the mean square difference between the approximated principal functions and the Hermite polynomials, i.e.
\begin{eqnarray}
\mathsf{MSE}_f &\triangleq& \mathbb{E} [(f_i(X) - H_i^{(\sigma_1)}(X))^2],\\
\mathsf{MSE}_g &\triangleq& \mathbb{E} [(g_i(Y) - H_i^{(\sigma_1+\sigma_2)}(Y))^2].
\end{eqnarray}
Table~\ref{tab:hermite} provides the mean square difference, as well as the theoretical and estimated PICs.
Since the CA-NN approximates the Hermite polynomials, the estimated PICs are also close to their theoretical values.

\begin{figure}
\centering
\includegraphics[width=.8\textwidth]{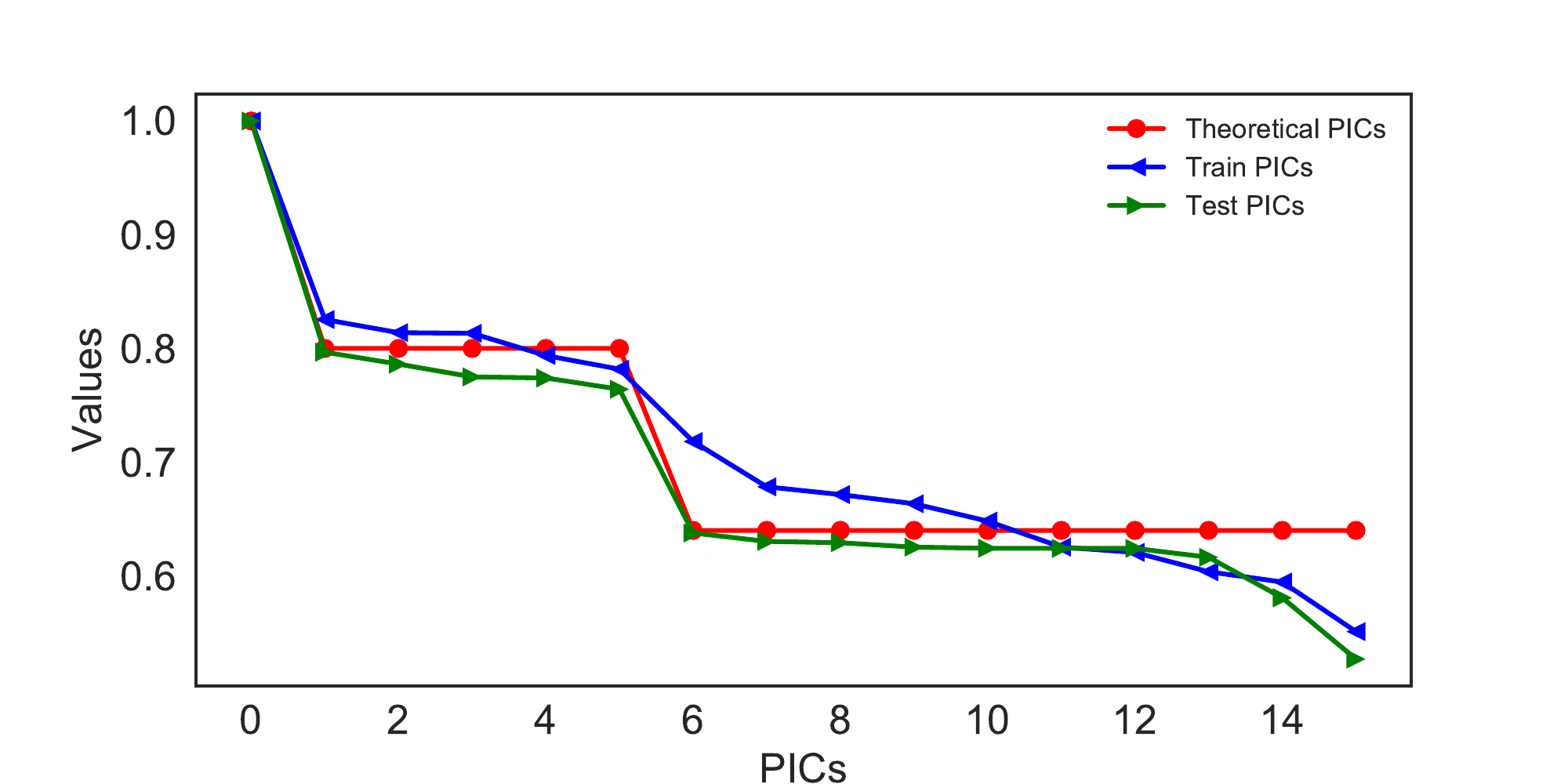}
\caption{Theoretical and approximated PICs between inputs and outputs of a BSC.}
\label{fig:bsc_pics}
\end{figure}

\begin{figure}
    \centering
    \subfloat[Crossover probability $\delta = 0.1$]{
        \includegraphics[width=0.5\textwidth]{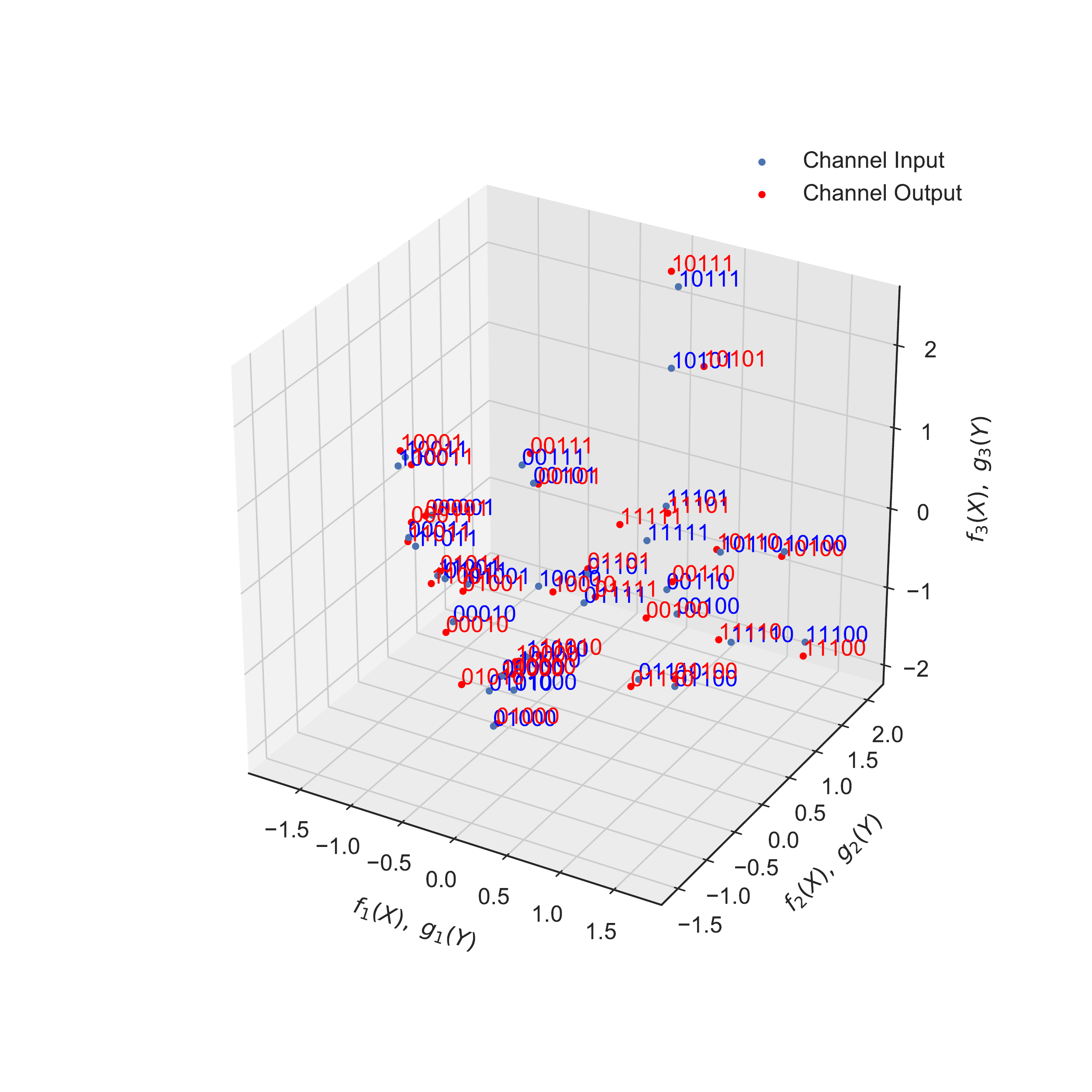}
    }
    \subfloat[Crossover probability $\delta = 0.9$]{
        \includegraphics[width=0.5\textwidth]{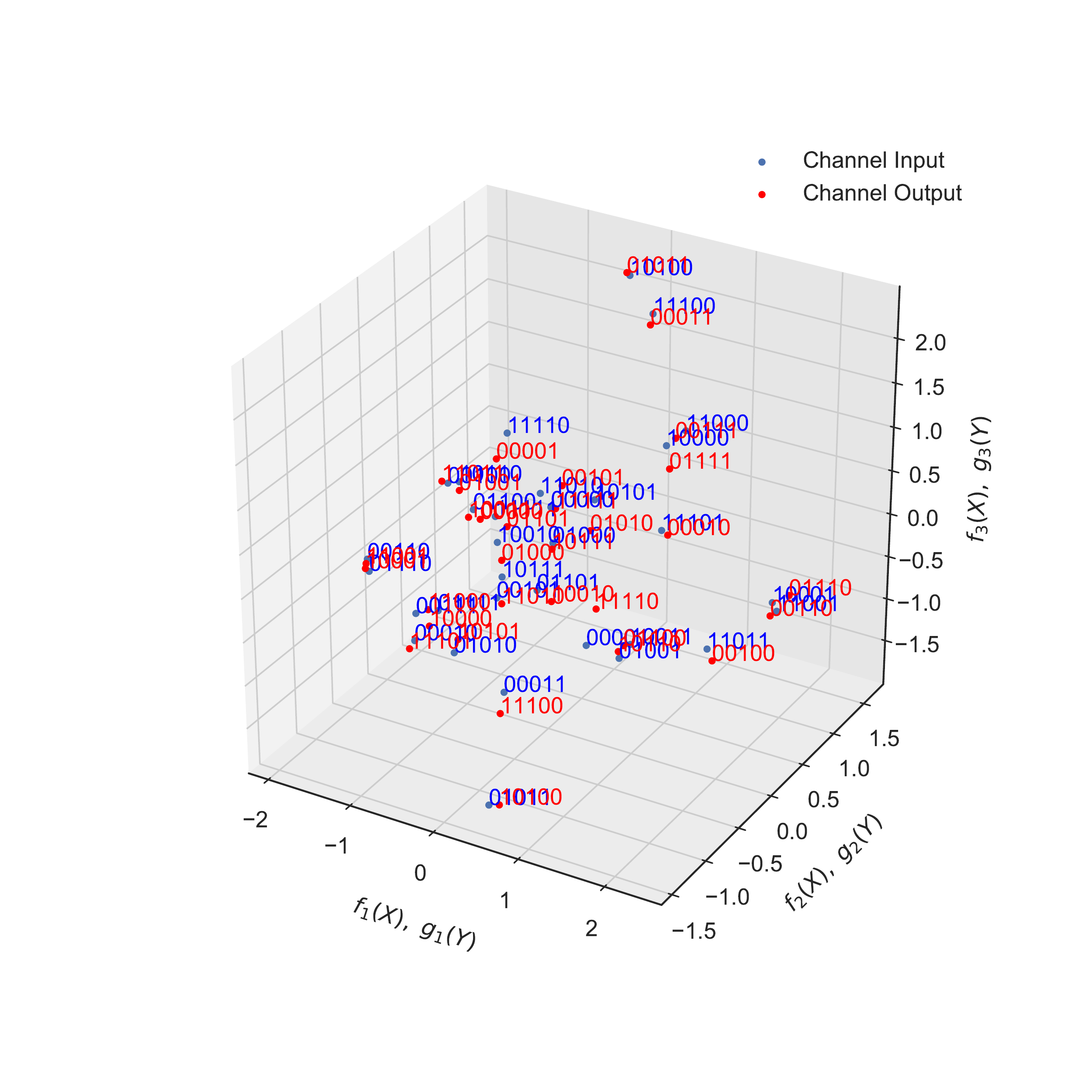}
    }
    \caption{Three-dimensional factoring planes for the BSC with uniform inputs with different crossover probability $\delta$.}
    \label{fig:bsc_delta}
\end{figure}

\begin{figure}[!tb]
    \centering
    \includegraphics[width=.9\textwidth]{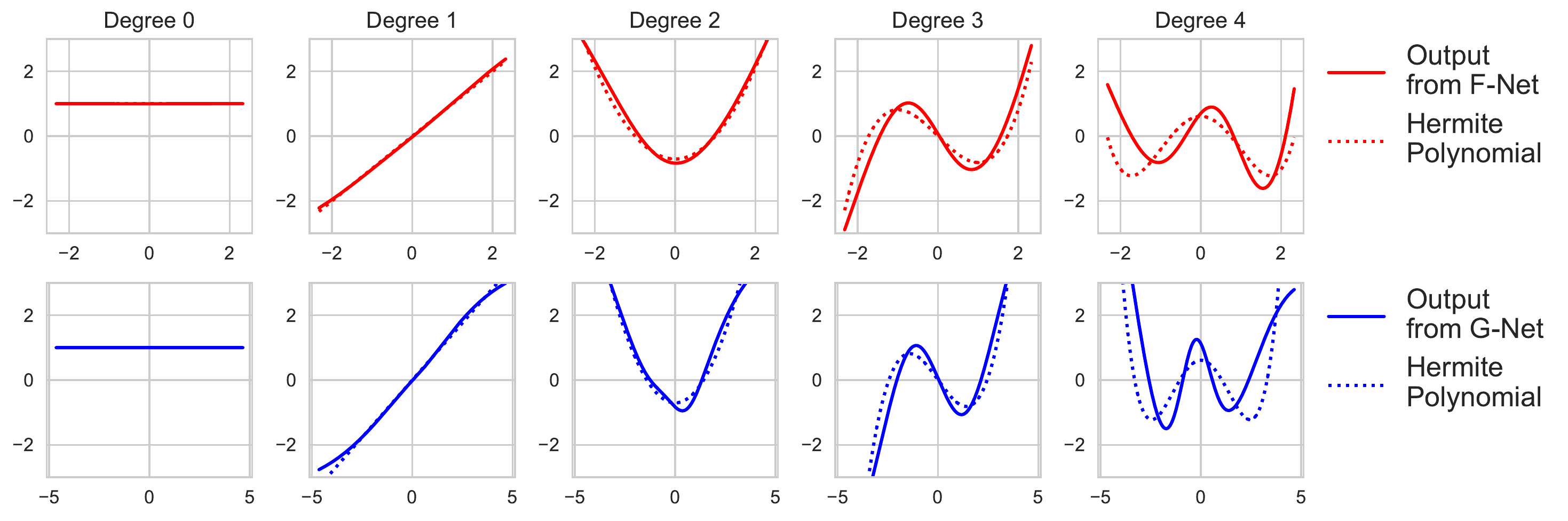}
    \caption{Hermite polynomials of degree $0$ to $4$ and outputs of the CA-NN that approximate the $0^\text{th}$ to $4^\text{th}$ principal functions.}
    \label{fig:hermite_4}
\end{figure}

\begin{table}[!tb]
  \caption{The MSE when using the FG-Net to approximate the principal functions (Hermite polynomials)}
  \label{tab:hermite}
  \centering
  \begin{tabular}{lllllllllll}
     &          & $1^\text{st}$ & $2^\text{nd}$ & $3^\text{rd}$ & $4^\text{th}$
    \\ \hline \\
     & $\mathsf{MSE}_f$ & $0.0001$ & $0.0042$ & $0.0213$ & $0.0522$ \\
     & $\mathsf{MSE}_g$ & $0.0053$ & $0.0197$ & $0.0238$ & $0.0583$ \\        
     & True PICs & $0.6977$ & $0.4675$ & $0.2979$ & $0.2113$ \\
     & CorrA-NN   & $0.7007$ & $0.4938$ & $0.3376$ & $0.2037$ \\
  \end{tabular}
\end{table}

\subsection{Noisy MNIST Dataset}\label{sec:noisy_mnist}
The noisy MNIST dataset \citep{wang2015deep} consists of $28 \times 28$ grayscale handwritten digits, with $60$K/$10$K images for training/testing. Each image is rotated at angles uniformly sampled from $[-\pi/4, \pi/4]$, and random noise uniformly sampled from $[0, 1]$ is added. 
We let $X$ be those images and $Y$ be the ture labels.

The CA-NN is composed of two neural nets with different structures.
Since the inputs of the encoder F-Net are images, we use two convolutional layers with output sizes $32$ and $64$ with filter dimension $5 \times 5$ and max pooling, a fully-connected layer with $1,024$ units, and a readout layer with output size $10$. 
For the G-Net, the inputs are the one-hot encoded labels, and we use two hidden layer with output size $128$ and $64$, respecively, and a readout layer with output size $10$.
We adopt ReLU activation for all hidden layers in the CA-NN.

We train for $200$ epochs on the training set with a batch size of $2048$ using a gradient descent optimizer with a learning rate of $0.01$.
To avoid numerical instability, we clip the outputs of the F-Net to the interval $[-10000, 10000]$. Moreover, when back-propagating the objective in (2), we compute $\bC_{fg}^\intercal (\bC_f^{-1}+\epsilon \bI_d)\bC_{fg}$ instead of $\bC_f^{-1/2}\bC_{fg}$, where $\epsilon = 0.001$ to avoid an invalid matrix inverse. 
Using the reconstitution formula~(3), we reconstruct the likelihood $\pygx$ for classification, and  obtain an accuracy of $99.76\%$ on the training set, $96.77\%$ on the test set.

The PICs are reported in Table~\ref{tab:nmnist_pic}, and the factoring planes drawn with the nine principal functions extracted from training and test set are shown in Figure~\ref{fig:nmnist_train_ca} and Figure~\ref{fig:nmnist_test_ca} respectively.

\subsection{CIFAR-10 Images}
The CIFAR-10 dataset contains $32 \times 32$ colored images, each with three channels representing the RGB color model, along with a label representing one of $10$ categories.
We let $X$ be the images and $Y$ be the labels.
In this experiment, the CA-NN is composed of two neural nets with different structures.
For the F-Net, we use five convolutional layers with max pooling, two fully-connected layers, and a readout layer. 
The convolutional layers have output size $128$, and the filter dimension is $3 \times 3$; the two fully-connected layers have output sizes $384$ and $192$. 
The G-Net has the same architecture as the one we use for training over the noisy MNIST, see the previous Section~\ref{sec:noisy_mnist}. 
We train for $200$ epochs with a batch size of $256$ using a gradient descent optimizer with learning rate $0.001$. 
The accuracy, once again obtained via classification using the likelihood given by the reconstitution formula in (3), is $93.41\%$ on the training set and $89.75\%$ on the test set.
The PICs are reported in Table~\ref{tab:cifar}, and the factoring planes of the nine principal functions extracted from training and test set are shown in Figure~\ref{fig:cifar_train} and Figure~\ref{fig:cifar_test} respectively, where again each colored point corresponds to an image ($X$) differentiated by color for each class, and the black point corresponds to the labels ($Y$).

\subsection{Kaggle What's Cooking Recipe Data}
We first describe how we pre-processed this dataset. Originally the Kaggle What's Cooking Recipe data contains a list of detailed ingredients for each recipe, along with the type of cuisine the dish corresponds to.
We parse the descriptions using Natural Language Toolkit (NLTK) in Python \citep{bird2004nltk} to tokenize the descriptions into a vector of ingredients for each recipe. Next, we keep only the top 146 most common ingredients and discard the others. This is done for visualization purposes on the factorial planes. The output of this process for an example recipe is shown in Table~\ref{tab:whats_cooking}. 

\begin{table}
    \centering
    \begin{tabular}{|c | l |} \hline
        Before & \begin{minipage}{6cm} romaine lettuce, black olives, grape tomatoes, garlic, pepper, purple onion, seasoning, garbanzo beans, feta cheese crumbles\end{minipage} \\[2em]
        \hline
        After  & \begin{minipage}{6cm} onion, garlic, pepper, tomato, lettuce, bean \end{minipage} \\[1em] \hline
    \end{tabular}
    \caption{Effect of the pre-processing and removal of ingredients on a greek recipe.}
    \label{tab:whats_cooking}
\end{table}

The CA-NN is composed of two simple neural nets with $3$ hidden layers, with $30$ units per hidden layers.
Both neural nets adopt hyperpolic tangent activation functions. 
We train the whole dataset for $20000$ epochs by gradient descent optimizer with learning rate $0.005$.
In addition to the first factoring plane shown in the main text, we illustrate the following two factoring planes in Figure~\ref{fig:kaggle_ca_2} and Figure~\ref{fig:kaggle_ca_3} respectively.
Since the PICs of this dataset are large in general, the second and third factoring planes also contain some amount of information. In particular the third principal function allows to separate Indian cuisine from Asian and Western cuisine. Moroccan cuisine is between Indian and Western cuisine on this axis. The fourth principal function separates Asian cuisines into, on one hand Vietnamese and Thai cuisine, and on the other Chinese, Korean and Japanese cuisine. Note that, in this case, there are no signature ingredient, instead it is the entire recipe which helps determining which family of Asian cuisine a dish belongs to.

\subsection{UCI Wine Quality Data}
The CA-NN is composed of two neural nets with different structures.
For the F-Net, we use a simple neural nets with $3$ hidden layers, where the numbers of units at each layer are $500$, $100$, and $30$. 
For the G-Net, we use a simple neural nets with $3$ hidden layers, where the numbers of units at each layer are $10$, $5$, and $3$. 
Both neural nets adopt hyperbolic tangent activation functions. 
We train the whole dataset for $1000$ epochs using an Adam optimizer \citep{kingma2014adam} with learning rate $0.001$.

The PICs are reported in Table~\ref{tab:uci}, and  we illustrate the first two and following two factoring planes in Figure~\ref{fig:red_wine_ca_features} and Figure~\ref{fig:red_wine_ca_34_features} respectively. 
Moreover, we plot the minimum and maximum values of the $11$ features.
In Figure~\ref{fig:red_wine_ca_features}, since we have an additional second factoring plane, we observe that the interpolation path of a low quality and high quality wines does not actually pass through the cluster of medium quality wines.
Since there are only two significant PICs in Table~\ref{tab:uci}, we can see that the third and fourth factoring planes in Figure~\ref{fig:red_wine_ca_34_features} contain barely any information.

\subsection{Influence of the Encoder Net Depths}
We investigate the influence of different configurations of the encoders F and G Nets on the estimation of the PICs. 
Specifically, we adopt the experiment setting in Section~4.1.1, and vary neural network configurations including depth and number of neurons. 
In Table~\ref{tab:pic_config}, we summarize the estimation of the principal inertia components and different configurations of the encoders F and G Nets.
As we can see deeper encoders are prone to overfit the PICs, while shorter and wider encoders are likely to give more accurate estimations of the PICs.

\begin{table}[t!]
\caption{{\small Estimating the PICs with different configurations of the CA-NN.}}
\label{tab:pic_config}
\begin{center}
\begin{tabular}{lclclclclclclclclcl}
\multicolumn{5}{c}{\bf Discrete PICs} 
\\ \hline \\
 & $1^\text{st}$ PIC & $2^\text{nd}$ PIC & $3^\text{rd}$ PIC & $4^\text{th}$ PIC\\
Analytic value & $0.8000$ & $0.8000$ & $0.8000$ & $0.8000$  \\
$30$-$30$-$25$     & $0.8011$ & $0.7942$ & $0.7918$ & $0.7883$  \\
$30$-$30$-$30$-$25$     & $0.8272$ & $0.8217$ & $0.8144$ & $0.7926$  \\
$20$-$20$-$15$     & $0.8259$ & $0.8201$ & $0.8195$ & $0.8075$  \\
$40$-$30$-$20$-$15$     & $0.8363$ & $0.8274$ & $0.8182$ & $0.8020$  \\
$50$-$50$-$30$     & $0.8260$ & $0.8199$ & $0.8193$ & $0.8001$  \\
$60$-$50$-$40$-$30$-$20$     & $0.8226$ & $0.8179$ & $0.8079$ & $0.7972$  
\end{tabular}
\end{center}
\end{table}

\clearpage
\section{Algorithms}\label{app:algo}
\begin{algorithm}[!tb] 
\caption{Recovering $\bF_n(\bx_n)$ and $\bG_n(\by_n)$ from $\tilde{\bF}_n(\bx_n)$ and $\tilde{\bG}_n(\by_n)$, the output of the FG-Nets.}\label{algo:whitening}
\begin{algorithmic}[1] 
\Input $\tilde{\bF}_n(\bx_n)$ and $\tilde{\bG}_n(\by_n)$
\Output Principal functions $\bF_n(\bx_n)$ and $\bG_n(\by_n)$
\State $\tilde{\bF}_n(\bx_n) \gets \tilde{\bF}_n(\bx_n) - \EE{\tilde{\bF}_n(\bx_n)}$,
\Statex $\tilde{\bG}_n(\by_n) \gets \tilde{\bG}_n(\by_n) - \EE{\tilde{\bG}_n(\by_n)}$ \Comment{(Remove mean)}
\State $\bU_f, S_f, \bV_f \gets$ SVD of $\frac{1}{n}\tilde{\bF}_n(\bx_n) \tilde{\bF}_n(\bx_n)^\intercal$,
\Statex $\bU_g, S_g, \bV_g\gets$ SVD of $\frac{1}{n}\tilde{\bG}_n(\by_n) \tilde{\bG}_n(\by_n)^\intercal$
\State $\bC_f^{-1/2} \gets \bU_f S_f^{-1/2} \bV_f^\intercal$,
\Statex $\bC_g^{-1/2} \gets \bU_g S_g^{-1/2} \bV_g^\intercal$ \Comment{(Find inverse)}
\State $\bL = \frac{1}{n} (\bC_f^{-1/2}\tilde{\bF}_n(\bx_n)) (\bC_g^{-1/2}\tilde{\bG}_n(\bx_n))^\intercal$ 
\State $\bU, S, \bV \gets$ SVD of $\bL$ \Comment{(Find singular vectors)}
\State $\bA = \bU^\intercal\bC_f^{-1/2} $, $\bB = \bV^\intercal\bC_g^{-1/2}$
\State \Return $\bA\tilde{\bF}_n(\bx_n)$, $\bB\tilde{\bG}_n(\by_n)$
\end{algorithmic}
\end{algorithm}
Algorithm~\ref{algo:whitening} summarizes how to convert the outputs $\tilde{\bF}_n(\bx_n)$ and $\tilde{\bG}_n(\by_n)$ of the CA-NN to the principal functions by the whitening processing.

\clearpage
\begin{table*}[!t]
  \caption{The PICs of training and test sets for noisy MNIST.}
  \label{tab:nmnist_pic}
  \centering
  \begin{tabular}{lllllllllll}
    PICs  & $1^\text{st}$ & $2^\text{nd}$ & $3^\text{rd}$ & $4^\text{th}$ & $5^\text{th}$ & $6^\text{th}$ & $7^\text{th}$ & $8^\text{th}$ & $9^\text{th}$
    \\ \hline \\
    Training & $0.989$ & $0.987$ & $0.987$ & $0.985$ & $0.982$ & $0.981$ & $0.979$ & $0.978$ & $0.976$   \\
    Test     & $0.957$ & $0.945$ & $0.944$ & $0.927$ & $0.925$ & $0.924$ & $0.921$ & $0.917$ & $0.903$   \\
  \end{tabular}
\end{table*}

\begin{figure*}[!b]
\centering
\includegraphics[width=\textwidth]{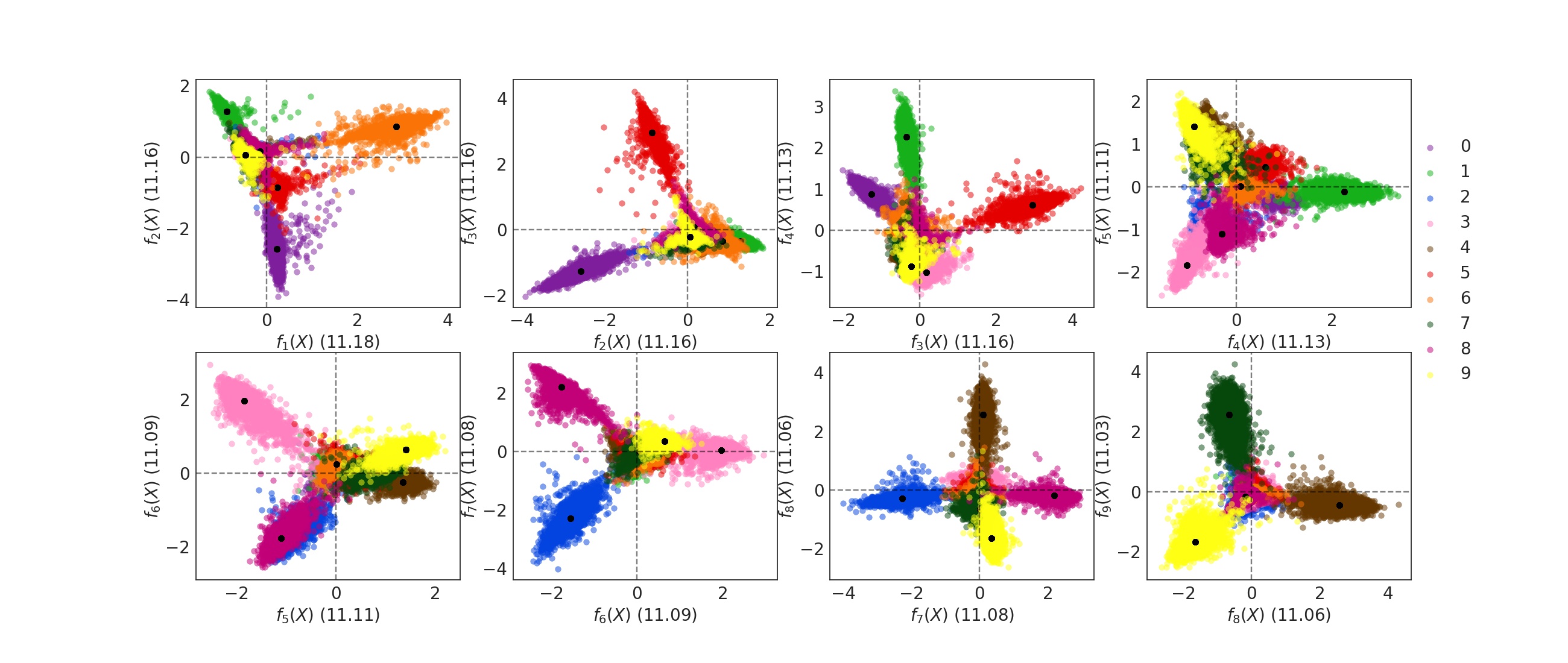}
\caption{Factoring planes of noisy MNIST on training set.}
\label{fig:nmnist_train_ca}
\vspace*{\floatsep}
\includegraphics[width=\textwidth]{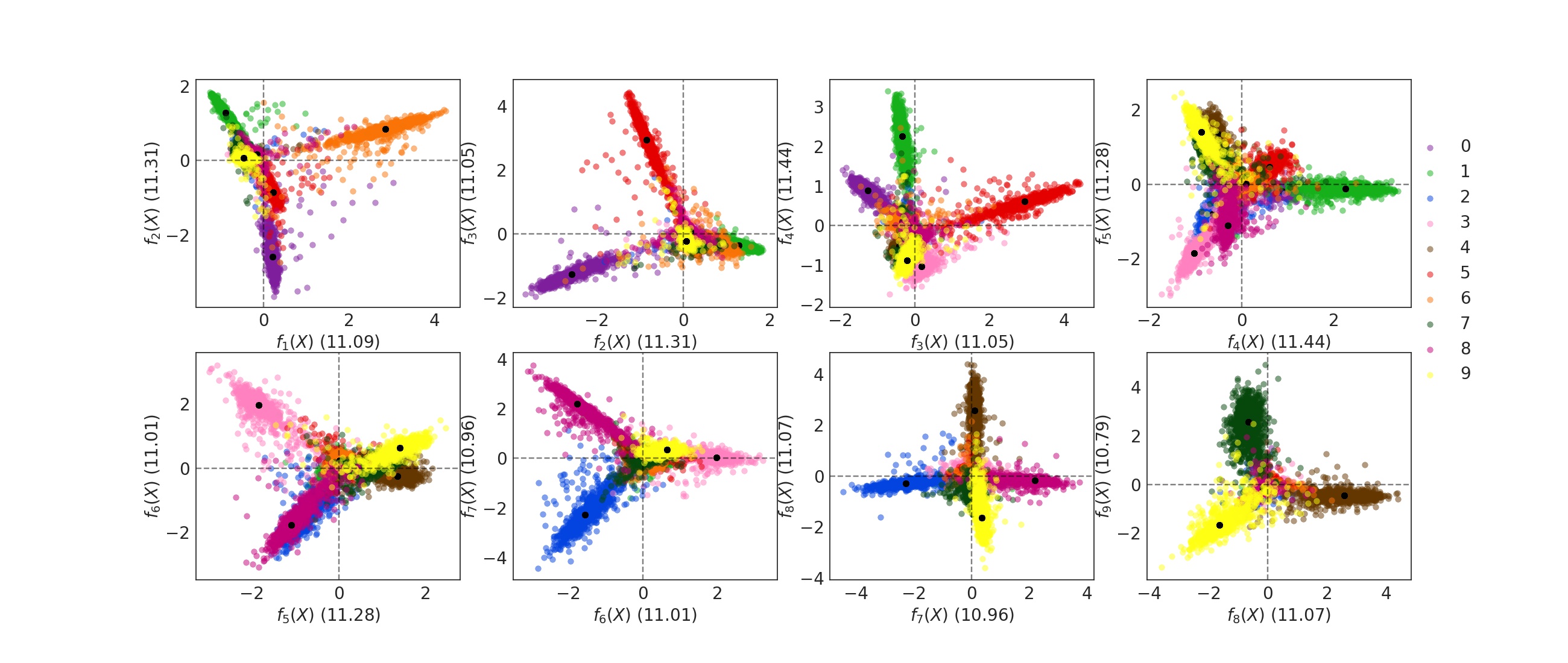}
\caption{Factoring planes of noisy MNIST on test set.}
\label{fig:nmnist_test_ca}
\end{figure*}

\clearpage
\begin{table*}[!t]
  \caption{The PICs of training and test sets for CIFAR-10.}
  \label{tab:cifar}
  \centering
  \begin{tabular}{lllllllllll}
    PICs  & $1^\text{st}$ & $2^\text{nd}$ & $3^\text{rd}$ & $4^\text{th}$ & $5^\text{th}$ & $6^\text{th}$ & $7^\text{th}$ & $8^\text{th}$ & $9^\text{th}$
    \\ \hline \\
    Training & $0.996$ & $0.996$ & $0.996$ & $0.995$ & $0.995$ & $0.994$ & $0.994$ & $0.994$ & $0.993$   \\
    Test    & $0.837$ & $0.800$ & $0.752$ & $0.746$ & $0.739$ & $0.722$ & $0.584$ & $0.562$ & $0.487$   \\
  \end{tabular}
\end{table*}

\begin{figure*}[!b]
\centering
\includegraphics[width=\textwidth]{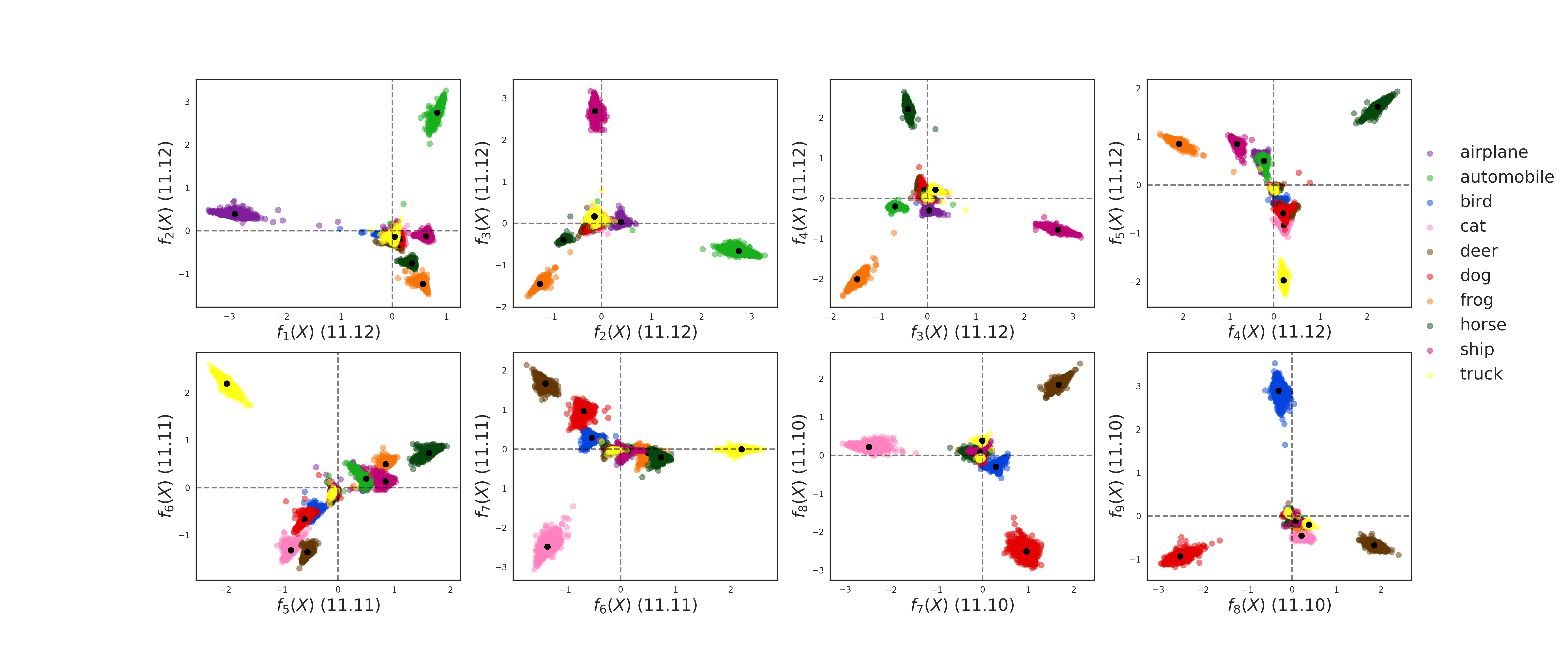}
\caption{Factoring planes of CIFAR-10 on training set.}
\label{fig:cifar_train}
\vspace*{\floatsep}
\includegraphics[width=\textwidth]{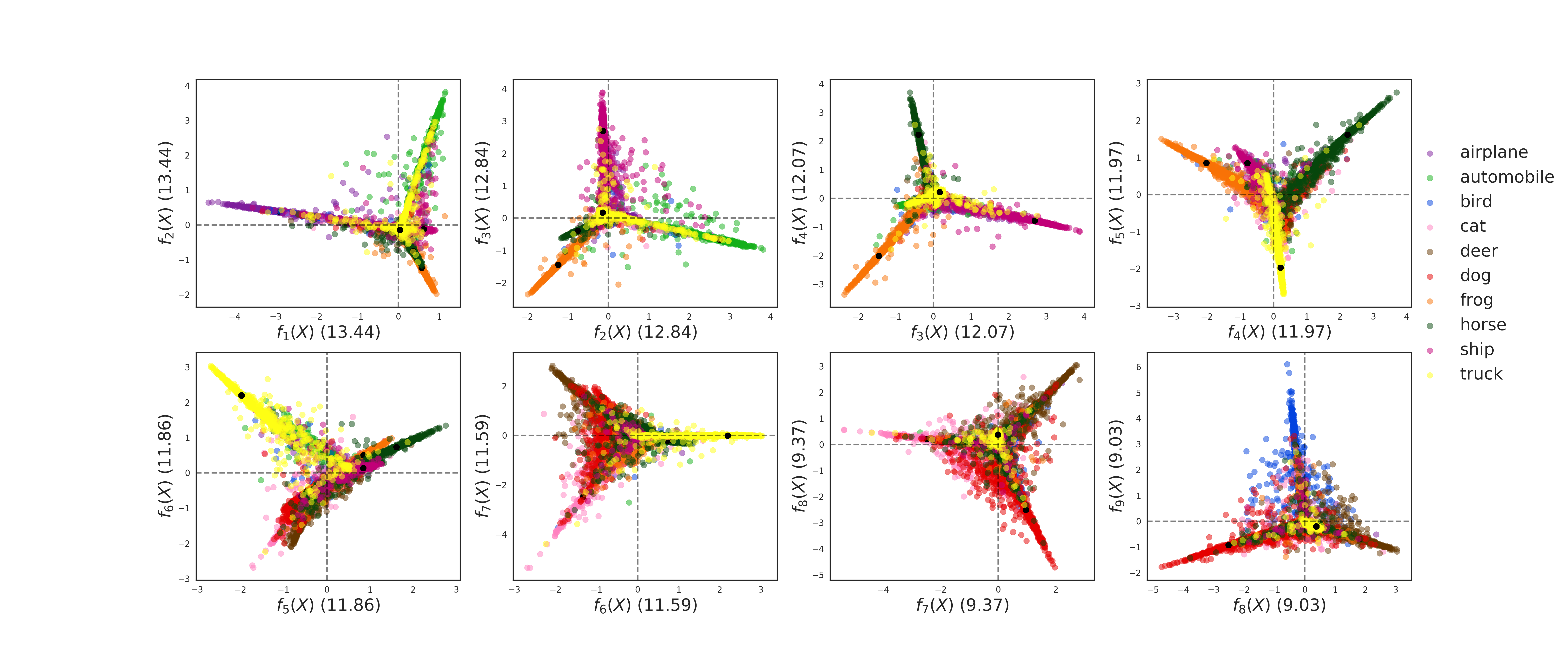}
\caption{Factoring planes of CIFAR-10 on test set.}
\label{fig:cifar_test}
\end{figure*}

\clearpage
\begin{figure*}[!tb]
\centering
\includegraphics[width=\textwidth]{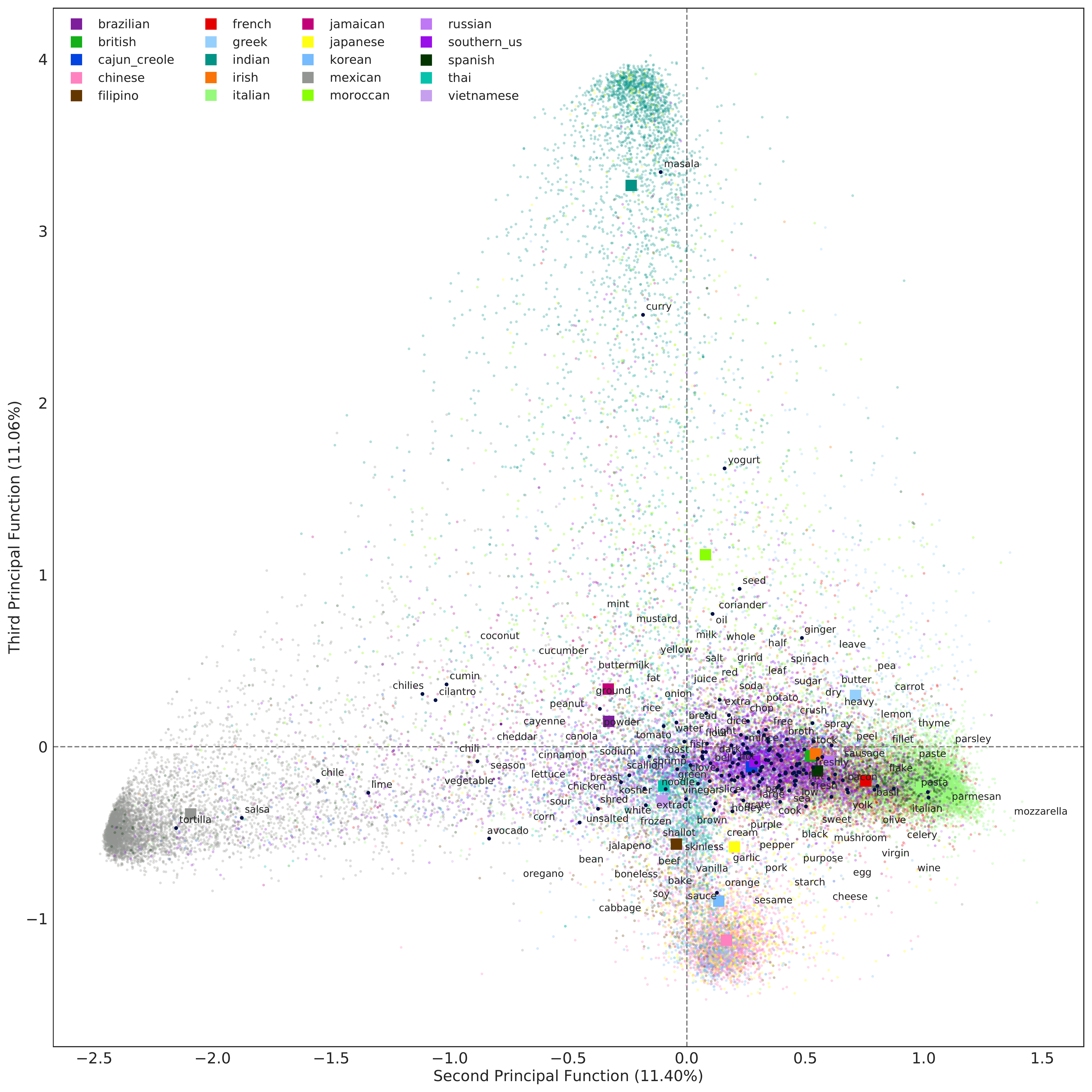}
\caption{The second factoring plane of CA on Kaggle What's cooking dataset (Colored dots: recipe, dark blue: ingredient).}
\label{fig:kaggle_ca_2}
\end{figure*}

\clearpage
\begin{figure*}[!tb]
\centering
\includegraphics[width=\textwidth]{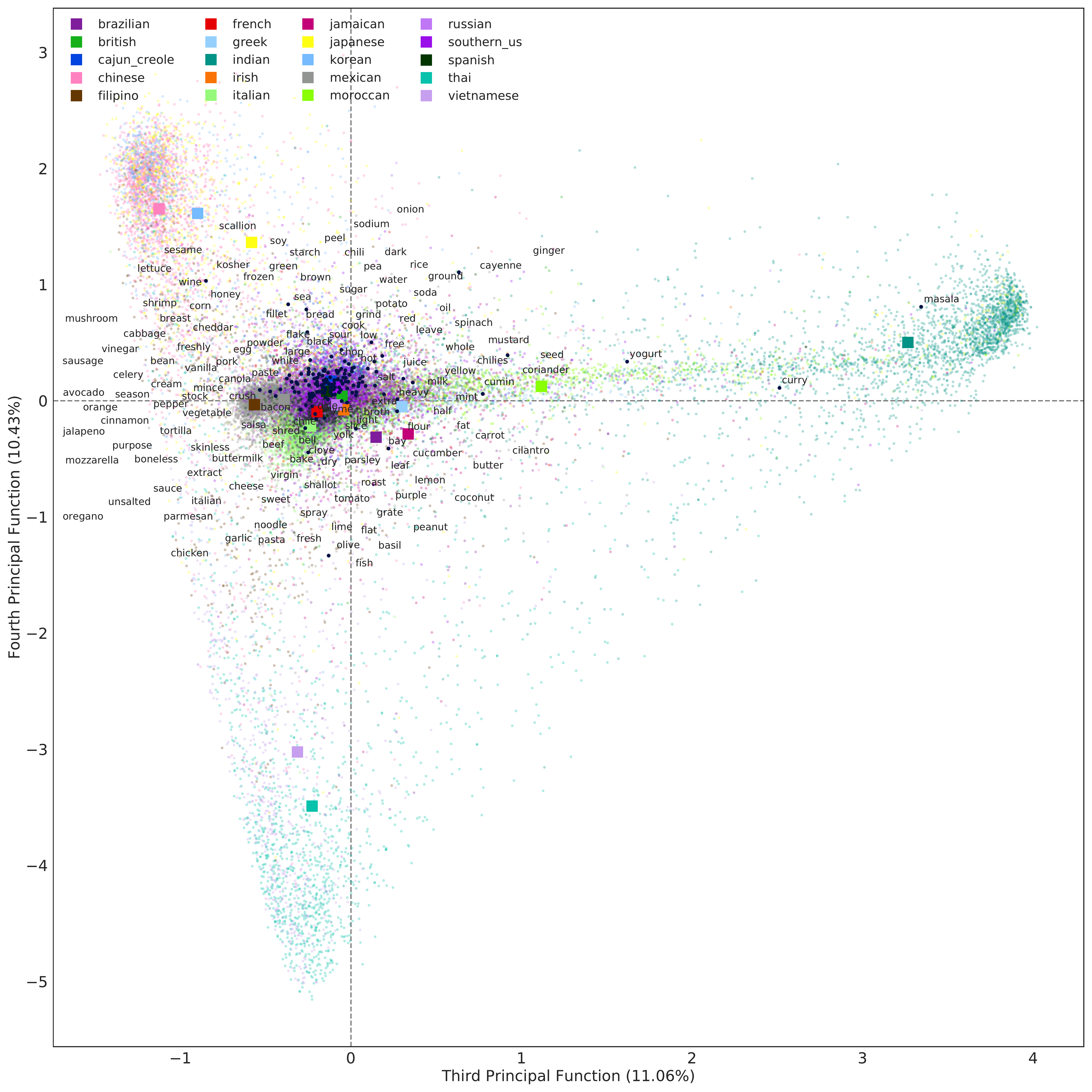}
\caption{The third factoring plane of CA on Kaggle What's cooking dataset (Colored dots: recipe, dark blue: ingredient).}
\label{fig:kaggle_ca_3}
\end{figure*}

\clearpage
\begin{table*}[!t]
  \caption{The PICs of training and test sets for UCI Wine Quality Data.}
  \label{tab:uci}
  \centering
  \begin{tabular}{lllllll}
    PICs  & $1^\text{st}$ & $2^\text{nd}$ & $3^\text{rd}$ & $4^\text{th}$ & $5^\text{th}$ & $6^\text{th}$ 
    \\ \hline \\
    Training & $9.9815e-01$ & $9.9353e-01$ & $5.6861e-02$ & $2.6282e-04$ & $2.0870e-06$ & $1.9238e-27$   \\
    Test     & $9.9984e-01$ & $6.1934e-01$ & $8.8158e-02$ & $2.8603e-04$ & $7.7783e-08$ & $1.4357e-15$   \\
  \end{tabular}
\end{table*}
\begin{figure*}[!tb]
\centering
\includegraphics[width=\textwidth]{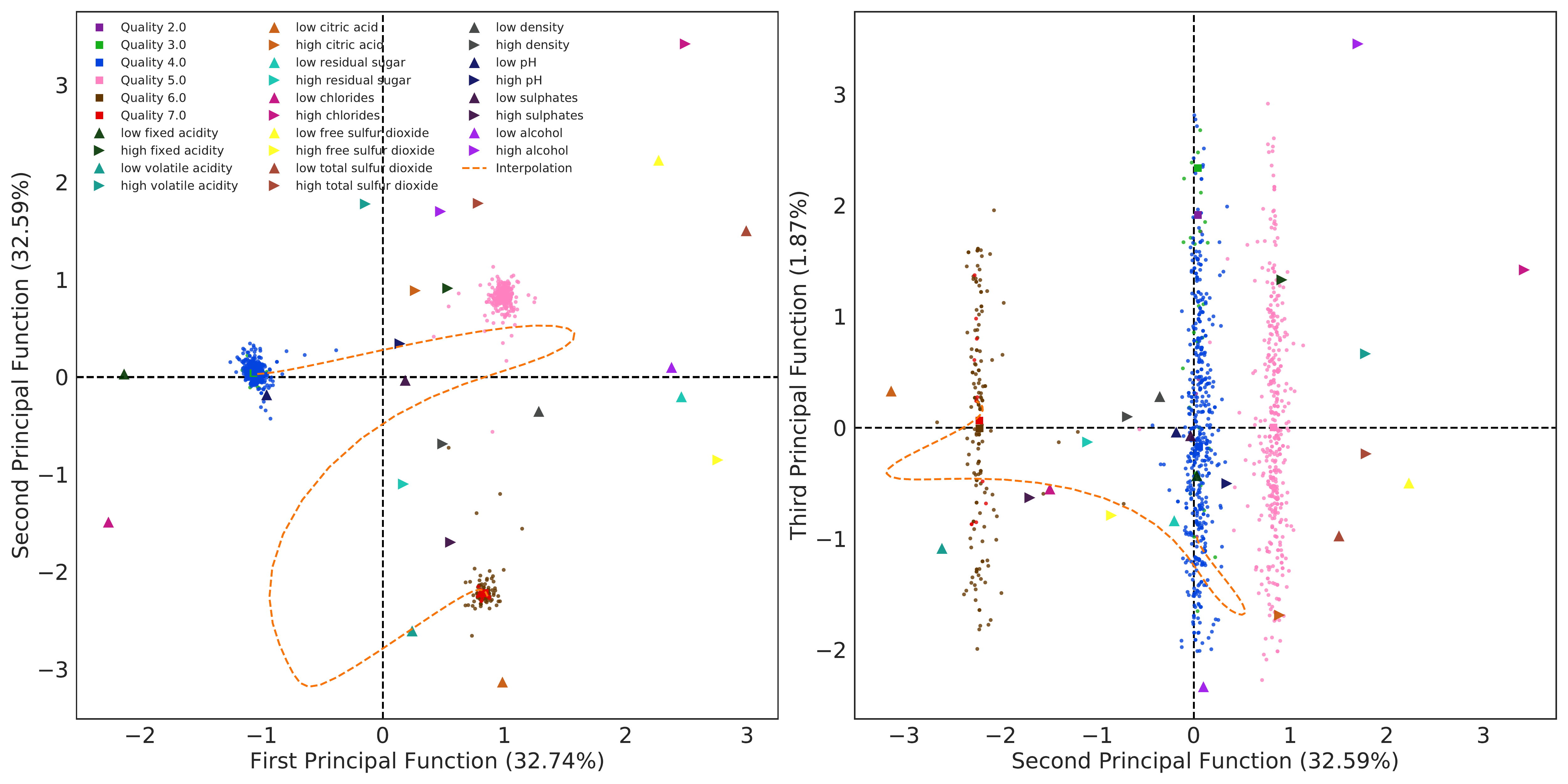}
\caption{The first (left) and second (right) factoring plane of CA on UCI wine quality dataset.}
\label{fig:red_wine_ca_features}
\end{figure*}

\begin{figure*}[!tb]
\centering
\includegraphics[width=\textwidth]{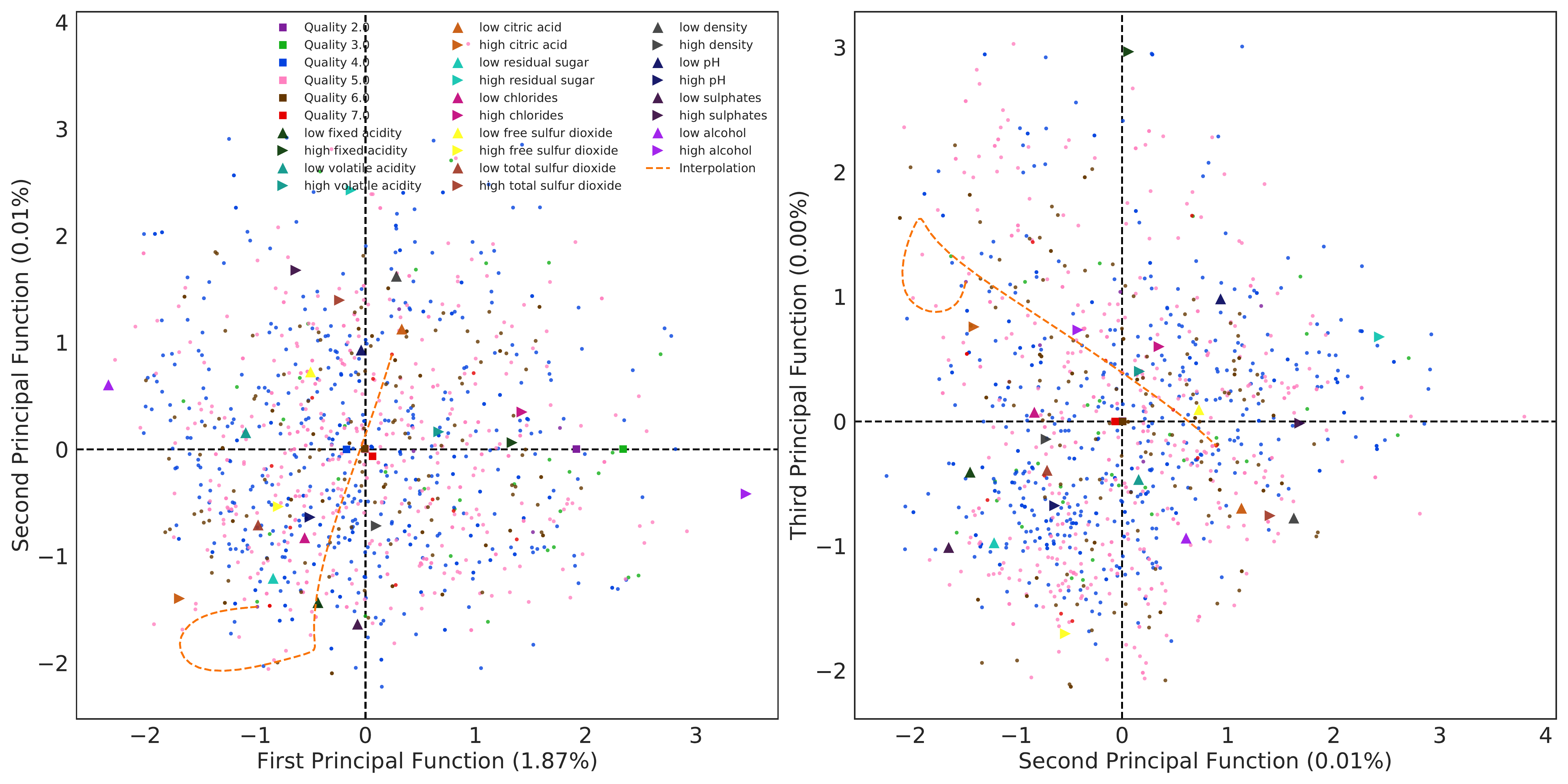}
\caption{The third (left) and fourth (right) factoring plane of CA on UCI wine quality dataset.}
\label{fig:red_wine_ca_34_features}
\end{figure*}

\clearpage
\begin{figure*}[!tb]
    \centering
    \includegraphics[width=\textwidth]{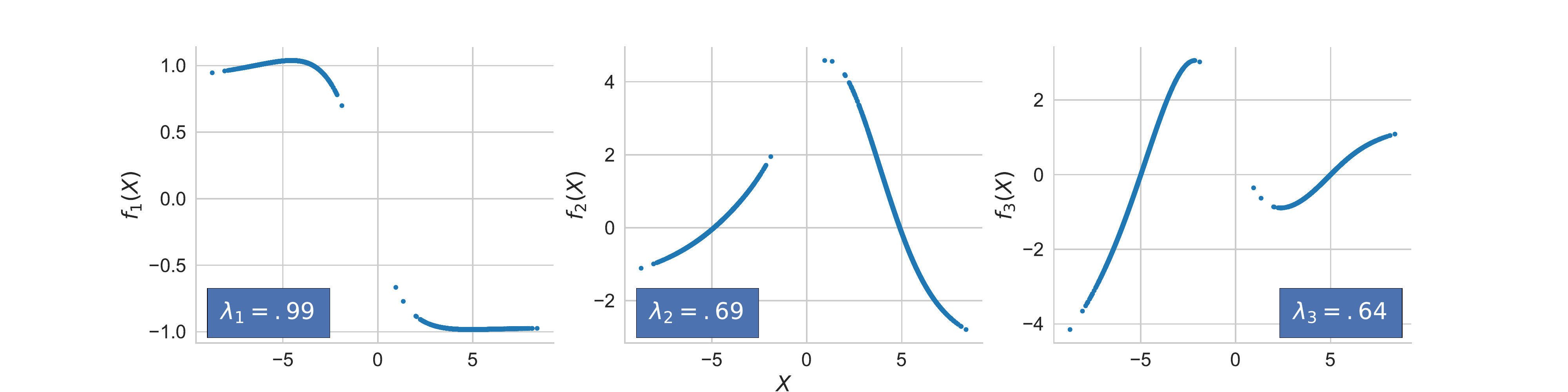}
    \caption{First three principal functions of a multimodal Gaussian, along with the associated PIC values.}
    \label{fig:multi_mode_gaussian}
\end{figure*}
\section{Additional Experiment - Multi-Modal Gaussian}\label{app:add_exp}
As a final set of experiments on synthetic data, we consider mixtures of Gaussian (or multi-modal Gaussian) random variables. More precisely, for $\mu_i \in \mathbb{R}^2, i = 0, 1$, we let $(X,Y) = \mathbf{1}(B = 0) \mathcal{N}(\mu_0, \Sigma) + \mathbf{1}(B = 1) \mathcal{N}(\mu_1, \Sigma)$, where $B \sim \text{Ber}(p)$, and $\mathcal{N}(\mu_i, \Sigma)$ are 2-dimensional multivariate Gaussian random variables with mean $\mu_i$ and covariance matrix $\Sigma$ independent of $B$. In this experiment, we demonstrate the power of the PICs as a fine representation of the relationship between $X$ and $Y$. In particular, letting $\Sigma$ have diagonal elements $1$ and off-diagonal elements $.7$, and letting $\mu_i = (-1)^i [5, 5]^T$, we obtain two modes, one at $[-5, -5]$ and the other at $[5,5]$. First, note that a general measure of dependence such as Mutual information, would be unable to capture the existence of two modes. In fact, one can verify that the mean-zero jointly Gaussian pair $(\tilde{X},\tilde{Y})$ which has correlation $.93$ satisfy $I(X;Y) = I(\tilde{X},\tilde{Y}) \approx 1.03$ nats. Despite this, the relationship between $X$ and $Y$ is different from the relationship between $\tilde{X}$ and $\tilde{Y}$, as exhibited by the principal functions Fig.~\ref{fig:multi_mode_gaussian}. Specifically, note that the first principal function distinguishes between the two modes. The second and third principal functions capture the two dimensional space of piece wise linear-function, where each mode follows a separate linear function. When it comes to the value of the PICs, we see that the top PIC is very close to 1, while  the top PIC of $(\tilde{X},\tilde{Y})$ is given by the correlation, i.e. $.93$. However, when it comes to estimating linear functions, one can perform better inference over $(\tilde{X}, \tilde{Y})$, since the PIC for this family of function is of about $.7$ in the multi-modal gaussian.

\section{Proofs}
\subsection{Proposition~\ref{prop:CA_PIC}}\label{app:proof_1}
If we write (\ref{eq:P_YgX}) into matrix form and following the definitions in Section~\ref{sec:implementation}, we have
\begin{eqnarray}\label{eq:pic_to_ca}
\mathbf{F} \mathbf{\Lambda}  \mathbf{G}^\intercal &=& \mathbf{D}_{X}^{-1}\mathbf{P}_{X,Y}\mathbf{D}_{Y}^{-1}-\mathbf{1}_{|\calX|}\mathbf{1}_{|\calY|}^\intercal\\
&=& \mathbf{D}_{X}^{-1}(\mathbf{P}_{X,Y}-\mathbf{p}_X\mathbf{p}_Y^\intercal)\mathbf{D}_{Y}^{-1}\\
&=& \mathbf{D}_{X}^{-1/2}\mathbf{Q}\mathbf{D}_{Y}^{-1/2}\\
&=& \mathbf{D}_{X}^{-1/2}\bU \bSigma \bV^\intercal\mathbf{D}_{Y}^{-1/2}\\
&=& \mathbf{L}\bSigma \mathbf{R}^\intercal,
\end{eqnarray}
where $[\mathbf{F}]_{i, j} = f_j(i)$, $[\mathbf{G}]_{i, j} = g_j(i)$ and $\mathbf{\Lambda} = \textsf{diag}(\lambda_0, \cdots, \lambda_d)$.
Eq.~(\ref{eq:pic_to_ca}) shows that in discrete case, the principal functions $\mathbf{F}$ and $\mathbf{G}$ are equivalent to the orthogonal factors $\mathbf{L}$ and $\mathbf{R}$ in the CA, and the factoring scores $\bSigma$ are the same as the PICs $\mathbf{\Lambda}$. The reconstitution formula in (\ref{eq:P_YgX}) actually connects the PICs and correspondence analysis, and enables us to generalize correspondence analysis to continuous variables \citep{hirschfeld1935connection, gebelein1941statistische}.

\subsection{Proposition~\ref{prop:opti}}\label{app:proof_2}
Since the objective (\ref{opti2}) can be expressed as
\begin{eqnarray}\label{sm:opti2}
&&\mathbb{E}[\|\bA\mathbf{\tilde{f}}(X)-\mathbf{\tilde{g}}(Y)\|^2_2] = \text{tr} \left( \bA\mathbb{E}[ \mathbf{\tilde{f}}(X)\mathbf{\tilde{f}}(X)^\intercal ]\bA^\intercal \right) \nonumber \\
&& - 2\text{tr} \left( \bA\mathbb{E}[ \mathbf{\tilde{f}}(X)\mathbf{\tilde{g}}(Y)^\intercal] \right) + \left( \mathbb{E}[\|\mathbf{\tilde{g}}(Y)\|^2_2] \right),
\end{eqnarray}
we have 
\begin{equation}
    \mathbb{E}[\|\bA\mathbf{\tilde{f}}(X)-\mathbf{\tilde{g}}(Y)\|^2_2] = d - 2\text{tr} \left( \bA \bC_{fg} \right) + \mathbb{E}[\|\mathbf{\tilde{g}}(Y)\|^2_2],
\end{equation}
where the last equation comes from the fact that $\text{tr} \left( \bA\mathbb{E}[ \mathbf{\tilde{f}}(X)\mathbf{\tilde{f}}(X)^\intercal ]\bA^\intercal \right) = \text{tr} \left( \mathbf{I}_d \right) = d$. Since $\bC_f$ is positive-definite, $C_f^{-\frac{1}{2}}$ exists, and so does $\bA = \Tilde{\bA}\bC_f^{-\frac{1}{2}}$, and (\ref{sm:opti2}) can be alternatively expressed as
\begin{eqnarray}
\begin{aligned}
\min\limits_{\bA \in \Reals^{d\times d},\mathbf{\tilde{f}},\mathbf{\tilde{g}}} &\; -2\text{tr}(\Tilde{\bA}\bB) + \mathbb{E}[\|\mathbf{\tilde{g}}(Y)\|^2_2]\\
\text{subject to}&\; \Tilde{\bA}\Tilde{\bA}^\intercal = \mathbf{I}_d,
\end{aligned}
\end{eqnarray}
where $\bB = \bC_f^{-\frac{1}{2}}\bC_{fg}$.
The term $\text{tr}(\Tilde{\bA}\bB)$ can be upper bounded by the Von Neumann's trace inequality \citep{mirsky1975trace},
\begin{equation}
    \text{tr}(\Tilde{\bA}\bB) \leq \sum_{i=1}^d \sigma_{\Tilde{\bA}, i}\sigma_{\bB, i},
\end{equation}
where $\sigma_{\Tilde{\bA}, i}$'s and $\sigma_{\bB, i}$'s are the singular values for $\tilde{\bA}$ and $\bB$ respectively. 
Moreover, the upper bounded can be achieved by solving the orthogonal Procrustes problem \citep{gower2004procrustes}, and the optimizer is $\Tilde{\bA}^* = \bV\bU^\intercal$, where $\bV$ and $\bU$ are given by the SVD of $\bB = \bU\mathbf{\Sigma}_\bB \bV^\intercal$.
Therefore, 
\begin{equation}
    \text{tr}(\Tilde{\bA}^*\bB) = \text{tr}(\bV\bU^\intercal \bU\mathbf{\Sigma}_\bB \bV^\intercal) = \sum_{i=1}^d \sigma_{\bB, i}
\end{equation}
which is the $d$-th Ky-Fan norm of $\bB$. The desired result then follows by simple substitution.

\end{document}